\renewcommand{\argmax}{\mathrm{arg\,max}}
\newtheorem{theorem}{Theorem}
\newcommand{\etal}{\textit{et al.}}
\newcommand{\ie}{\textit{i}.\textit{e}.}
\newcommand\norm[1]{\left\lVert#1\right\rVert}
\newtheorem{lemma}{Lemma}
\newtheorem{assumption}{Assumption}
\title{Amata: An Annealing Mechanism for Adversarial Training Acceleration}
\author {
    Nanyang Ye,\textsuperscript{\rm 1}
    Qianxiao Li ,\textsuperscript{\rm 2, \rm 5}
    Xiao-Yun Zhou, \textsuperscript{\rm 3}
    Zhanxing Zhu \footnote{Corresponding author} \textsuperscript{\rm 4}\\
}
\begin{document}
\maketitle

\begin{abstract}
Despite the empirical success in various domains, it has been revealed that deep neural networks are vulnerable to maliciously perturbed input data that much degrade their performance. This is known as adversarial attacks. To counter adversarial attacks, adversarial training formulated as a form of robust optimization has been demonstrated to be effective. However, conducting adversarial training brings much computational overhead compared with standard training. In order to reduce the computational cost, we propose an annealing mechanism, Amata, to reduce the overhead associated with adversarial training. The proposed Amata is provably convergent, well-motivated from the lens of optimal control theory and can be combined with existing acceleration methods to further enhance performance. It is demonstrated that on standard datasets, Amata can achieve similar or better robustness with around 1/3 to 1/2 the computational time compared with traditional methods. In addition, Amata can be incorporated into  other adversarial training acceleration algorithms (e.g. YOPO, Free, Fast, and ATTA), which leads to further reduction in computational time on large-scale problems.
\end{abstract}

\section{Introduction}
Deep neural networks were found to be vulnerable to malicious perturbations on the original input data. While the perturbations remain almost imperceptible to humans, they can lead to wrong predictions over the perturbed examples~\cite{szegedy2013intriguing,goodfellow2014explaining,akhtar2018threat}. These maliciously crafted examples are known as adversarial examples, which have caused serious concerns over the reliability and security of deep learning systems, particularly when deployed in life-critical scenarios, such as autonomous driving systems and medical domains.

Several defense mechanisms have been proposed, such as input reconstruction \cite{Meng2017MagNetAT,song2018pixeldefend}, input encoding \cite{buckman2018thermometer}, and adversarial training~\cite{goodfellow2014explaining,tramer2017ensemble,he2017adversarial,madry2017towards}. Among these methods, adversarial training is one of the most effective defense methods so far.  It can be posed as a robust optimization problem~\cite{ben1998robust}, where a min-max optimization problem is solved~\cite{madry2017towards,kolter2017provable}. For example, given a $C$-class dataset $S=\{(\xB^{0}_{i}, y_{i})\}^{n}_{i=1}$ with $\xB^{0}_{i} \in \Rcal^{d}$ as a normal or clean example in the $d$-dimensional input space and $y_{i} \in \Rcal^{C}$ as its associated one-hot label, the objective of adversarial training is to solve the following \textit{min-max optimization} problem:
\begin{equation}
    \label{minmax_obj}
    \min_{\thetaB} \frac{1}{N} \sum_{i=1}^{N} \max_{\norm{\xB_{i}-\xB^{0}_{i}}\leq \epsilon} \ell(h_{\thetaB}(\xB_{i}), y_{i})
\end{equation}
where $h_{\thetaB}:\Rcal^{d} \rightarrow \Rcal^{C}$ is the deep neural network (DNN) function, $\ell$ is the loss function and $\epsilon$ controls the maximum perturbation magnitude. The \textit{inner maximization} problem is to find an adversarial example $\xB_{i}$, within the $\epsilon$-ball around a given normal example $\xB^{0}_i$ that maximizes the surrogate loss $\ell$ for classification error. The \textit{outer minimization} problem is to find model parameters that minimizes the loss $\ell$ on the adversarial examples $\{\xB_{i}\}^{n}_{i=1}$ that are generated from the inner maximization. \textcolor{black}{Compared with a rich body of non-convex optimization algorithms for neural networks \cite{Goodfellow2016, kingma2014method, Hengyue2015}, designing efficient algorithm to solve the min-max problem to achieve robustness is relatively less studied.}

The inner maximization problem is typically solved by projected gradient descent (PGD). PGD perturbs a normal example $\xB^{0}$ by iteratively updating it in approximately the steepest ascent direction for a total of $K$ times. Each ascent step is modulated by a small step size and a projection step back onto the $\epsilon$-ball of $\xB^{0}$ to prevent the updated value from falling outside the $\epsilon$-ball of $\xB^{0}$ \citep{madry2017towards}:
\begin{equation}
    \xB^{k} = \prod \left(\xB^{k-1} + \alpha \cdot \text{sign}(\down_{\xB}\ell(h_{\thetaB}(\xB^{k-1}), y) \right)
\end{equation}
where $\alpha$ is the step size, $\prod(\cdot)$ is the orthogonal projection function onto $\{ \xB' : \| \xB^{0} - \xB' \| \leq \epsilon \}$, and $\xB^{k}$ is the adversarial example at $k$-th step \footnote{Note that our methodology can also be applied to single step methods, such as Fast. We take PGD as the first example for clarity.}.

A major issue limiting the practical applicability of adversarial training is the huge computational burden associated with the inner maximization steps: we need to iteratively solve the inner maximization problem to find good adversarial examples for DNN to be robust. Recently, to accelerate adversarial training, a few methods have been proposed. For example, YOPO estimated the gradient on the input by only propagating the first layer \cite{zhang2019you}, parallel adversarial training utilized multiple graphics processing units (GPUs) for acceleration \cite{Bhat2019} and running PGD-1 for multiple steps to reuse gradients \cite{Ali2019}.


Orthogonal to these approaches, in this paper we consider accelerating adversarial training by adjusting the number of inner maximization steps as training proceeds. This is in line with an empirical observation made by \cite{wang19i}, indicating that we might not need to find "good" solutions to the inner maximization at the initial stages of adversarial training to achieve better robustness. Hence, by varying the extent the inner maximization problem is solved, we may reduce the amount of wasted computation. This forms the basis of our proposed Annealing Mechanism for Adversarial Training Acceleration, named as \textbf{Amata}. Compared with traditional methods, Amata takes 1/3 to 1/2 the time to achieve comparable or slightly better robustness. Moreover, the general applicability of annealing procedures allows for effective combination of Amata with existing acceleration approaches. On the theoretical side, Amata is shown to converge. Furthermore, as adaptive training algorithms can often be interpretted as optimal control problems~\cite{li2017stochastic,li2019foundations}, we also develop a control theoretic viewpoint of general adversarial training. Under this framework, we can motivate the qualitative form of the Amata's annealing scheme based on loss landscapes. Furthermore,  a new criterion based on the Pontryagin's maximum principle can also be derived to quantify the approximate optimality of annealing schedules.


In summary, our contributions are as follows:
\begin{enumerate}
    \item We propose an adversarial training algorithm, Amata, based on annealing the inner maximization steps to reduce computation. The method is shown to be effective on benchmarks, including MNIST, CIFAR10, Caltech256, and the large-scale ImageNet dataset.
    \item As Amata is largely orthogonal to existing acceleration methods for adversarial training, it can be easily combined with them to further decrease computation. The combination of Amata with YOPO \cite{zhang2019you}, with adversarial training for free \cite{Ali2019}, with fast adversarial training \cite{Wong2020Fast}, and with adversarial training with transferable adversarial examples \cite{zheng2019efficient} is demonstrated.
    \item On the theoretical side, we prove the convergence of Amata. Moreover, we develop a general optimal control framework for annealed adversarial training, from which we can use the optimal control to qualitatively and quantitatively justify the proposed annealing schedule. This framework is also potentially useful as a basic formulation for future work on adaptive adversarial training methods.
\end{enumerate}

\section{Accelerating Adversarial Training by Annealing}


In this section, we first introduce the proposed Amata, which aims to balance the computational cost and the accuracy of solving the inner maximization problem. A proof of the convergence of the algorithm can be found in the Appendix.
Moreover, we introduce an optimal control formulation of general annealed adversarial training, from which one can elucidate the motivation and working principles of Amata, both qualitatively and quantitatively.

\subsection{Proposed annealing adversarial training algorithm}

\begin{algorithm}[!h]
  \caption{An instantatiation of Amata for PGD}
\begin{algorithmic}
\label{algo:aal}
   \STATE \textbf{Input: } $T$: training epochs; $K_{\text{min}}/K_{\text{max}}$: the minimal/maximal number of adversarial perturbations;  $\thetaB$: parameter of neural network to be adversarially trained; $\Bcal$: mini-batch; $\alpha$:  step size for  adversarial training; $\eta$: learning rate of neural network parameters. $\tau$: constant, maximum perturbation:$\epsilon$.
   \STATE \textbf{Initialization} $\thetaB=\thetaB_{0}$
   \FOR{$t = 0$ to $T-1$}
        \STATE Compute the annealing number of adversarial perturbations: $K_{t} = K_{\text{min}} + (K_{\text{max}}-K_{\text{min}})\cdot \frac{t}{T} \nonumber$

        \STATE Compute adversarial perturbation step size: $ \alpha_{t} = \frac{\tau}{K_{t}}$
        \FOR{each mini-batch $\xB^{0}_{\Bcal}$}
            \FOR{$k=1$ to $K_{t}$}
                \STATE Compute adversarial perturbations:
                \begin{align}
                    &\xB^{k}_{\Bcal} = \xB^{k-1}_{\Bcal} + \alpha_{t} \cdot \text{sign}(\down_{\xB} \ell(h_{\thetaB}(\xB^{k}_{\Bcal}), y), \nonumber \\
                    &\xB^{k}_{\Bcal} = \mathrm{clip}(\xB^{k}_{\Bcal}, \xB^{0}_{\Bcal}-\epsilon,  \xB^{0}_{\Bcal}+\epsilon) \nonumber
                \end{align}
            \ENDFOR
            \STATE $\thetaB_{t+1} =\thetaB_{t} -\eta \down_{\thetaB} \ell(h_{\thetaB_t}(\xB^{K_{t}}_{\Bcal}), y)$
        \ENDFOR
   \ENDFOR
   \STATE Collect $\thetaB_T$ as the parameter of adversarially-trained neural network.
\end{algorithmic}
\end{algorithm}

To set the stage we first summarize the proposed algorithm (Amata) in Algorithm~\ref{algo:aal}. The intuition behind Amata is that, at the initial stage, the neural network focuses on learning features, which might not require very accurate adversarial examples. Therefore, we only need a coarse approximation of the inner maximization problem solutions. With this consideration, a small number of update steps $K$ with a large step size $\alpha$ is used for inner maximization at the beginning, and then gradually increase $K$ and decrease $\alpha$ to improve the quality of inner maximization solutions. This adaptive annealing mechanism would reduce the computational cost in the early iterations while still maintaining reasonable accuracy for the entire optimization. \emph{Note that this algorithm is only an instantiation of this mechanism on PGD and the mechanism can also be seamlessly incorporated into other acceleration algorithms.} This will be demonstrated later\footnote{Amata can be also applied to other algorithms, such as YOPO, Free, and Fast, we use PGD as an example for clarity.}.

Next, we will show the sketch for proving the convergence of the algorithm with details in the Appendix. We denote $\xB^{*}_{i}(\thetaB)=\argmax_{\xB_{i} \in \Xcal^{i}} \ell(\thetaB, \xB_{i})$ where $\ell(\thetaB, \xB_{i})$ is a short hand notation for the classification loss function $\ell(h_{\thetaB}(\xB_{i}), y_{i})$, and $\Xcal^{i}$ is the permitted perturbation range for $\xB_{i}$. Before we prove the convergence of the algorithm, we have the following assumptions which are commonly used in literature for studying convergence of deep learning algorithms~\cite{Ruiqi2019,wang19i}. 

\begin{assumption}
\label{assum:continuous_paper}
The function $\ell(\thetaB, \xB)$ satisfies the gradient Lipschitz conditions:
\begin{align}
    \text{sup}_{\xB} \norm{\down_{\thetaB} \ell(\thetaB, \xB) -\down_{\thetaB} \ell(\thetaB^{*}, \xB) }_{2}  &\leq L_{\thetaB\thetaB} \norm{\thetaB - \thetaB^{*}}_{2} \nonumber\\
    \text{sup}_{\thetaB} \norm{\down_{\thetaB} \ell(\thetaB, \xB) -\down_{\thetaB} \ell(\thetaB, \xB^{*}) }_{2}  &\leq L_{\thetaB\xB} \norm{\xB - \xB^{*}}_{2} \nonumber\\
    \text{sup}_{\xB} \norm{\down_{\xB} \ell(\thetaB, \xB) -\down_{\xB} \ell(\thetaB^{*}, \xB) }_{2}  &\leq L_{\xB\thetaB} \norm{\thetaB - \thetaB^{*}}_{2} \nonumber
\end{align}
\end{assumption}
where $L_{\thetaB\thetaB}$, $L_{\thetaB\xB}$, and $L_{\xB\thetaB}$ are positive constants. Assumption~\ref{assum:continuous_paper} was used in \citep{sinha2018certifiable,wang19i}.

\begin{assumption}
\label{assum:strongconvex_paper}
The function $\ell(\thetaB, \xB)$ is \textit{locally $\mu$-strongly concave} in $\Xcal=\{\xB\,:\, \norm{\xB-\xB^{0}_{i}}_{\infty} \leq \epsilon \}$ for all $i \in [n]$, \ie, for any $\xB_{1}\, , \, \xB_{2} \in \Xcal_{i}$:
\begin{equation}
    \ell(\thetaB, \xB_{1}) \leq \ell(\thetaB, \xB_{2}) + \langle \down_{\xB}\ell(\thetaB, \xB_{2}), \xB_{1}-\xB_{2} \rangle - \frac{\mu}{2} \norm{\xB_{1}-\xB_{2}}^2_{2} \nonumber
\end{equation}
\end{assumption}
where $\mu$ is a positive constant which measures the curvature of the loss function. This assumption was used for analyzing distributional robust optimization problems \citep{sinha2018certifiable}.

\begin{assumption}
\label{assum:stochastic_variance_paper}
The variance of the stochastic gradient $g(\thetaB)$ is bounded by a constant $\sigma^2>0$:
\begin{equation}
   \Ebb[\norm{g(\thetaB)-\down L(\thetaB)}^{2}_{2}] \leq \sigma^2 \nonumber
\end{equation}
where $\down L(\thetaB) = \down_{\thetaB} \left(\sum_{i=1}^{N} \ell(\thetaB, \xB^*_i)\right)$ is the full gradient.
\end{assumption}
The Assumption~\ref{assum:stochastic_variance_paper} is commonly used for analyzing stochastic gradient optimization algorithms.

We denote the objective function in Equation~\ref{minmax_obj} as $L(\thetaB)$, its gradient by $\down L(\thetaB)$, the optimality gap between the initial neural network parameters and the optimal neural network parameters $\Delta=L(\thetaB_{0})-\min_{\thetaB}L(\thetaB)$,  the maximum distance between the output adversarial example generated by Amata and the original example as $\delta$, and $T$ as the number of iterations. Then, we have the following theorem for convergence of the algorithm:

\begin{theorem}[Convergence of Amata]\label{thm:convergence},
If the step size of outer minimization is $\eta_{t}=\min(1/\beta, \sqrt{\frac{\Delta}{TL\sigma^2}})$. Then, after $T$ iterations, we have:
\begin{equation}
    \frac{1}{T} \sum_{t=0}^{T-1} \Ebb[\norm{\down L(\thetaB_{t})}^{2}_{2}] \leq 4\sigma \sqrt{\frac{\beta\Delta}{T}} + 5L_{\thetaB\xB}^{2}\delta^2 \nonumber
\end{equation}
where $\sigma$ is the bound for variance between the batch gradient and the stochastic gradient and $\beta=L_{\thetaB\xB}L_{\xB\thetaB}/\mu + L_{\thetaB\thetaB}$ is a constant. 
\end{theorem}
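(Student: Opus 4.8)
The plan is to follow the standard template for nonconvex stochastic gradient descent, treating $L(\thetaB)$ as a smooth objective and carefully accounting for the systematic bias introduced by solving the inner maximization only approximately. The first step is to establish that $L(\thetaB)$ is differentiable with a Lipschitz gradient. Since $\ell(\thetaB, \cdot)$ is $\mu$-strongly concave on $\Xcal_i$ (Assumption~\ref{assum:strongconvex_paper}), the inner maximizer $\xB^*(\thetaB)$ is unique, and by Danskin's theorem $\down L(\thetaB) = \down_{\thetaB}\big(\frac{1}{N}\sum_i \ell(\thetaB, \xB^*_i(\thetaB))\big)$. To obtain the smoothness constant $\beta$, I would bound $\norm{\down L(\thetaB) - \down L(\thetaB')}$ by splitting it across the two arguments and invoking Assumption~\ref{assum:continuous_paper}: the $\thetaB$-variation contributes $L_{\thetaB\thetaB}$, while the $\xB^*$-variation contributes $L_{\thetaB\xB}\norm{\xB^*(\thetaB) - \xB^*(\thetaB')}$. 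The key auxiliary fact is that the maximizer is Lipschitz in $\thetaB$, namely $\norm{\xB^*(\thetaB) - \xB^*(\thetaB')} \le (L_{\xB\thetaB}/\mu)\norm{\thetaB - \thetaB'}$, which follows from strong concavity together with the first-order optimality condition $\down_{\xB}\ell(\thetaB, \xB^*(\thetaB)) = 0$. Combining these yields exactly $\beta = L_{\thetaB\xB}L_{\xB\thetaB}/\mu + L_{\thetaB\thetaB}$.

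With smoothness in hand, the second step applies the descent lemma along the update $\thetaB_{t+1} = \thetaB_t - \eta_t g_t$, where $g_t$ is the stochastic gradient evaluated at the approximate maximizer $\xB^{K_t}$ rather than at $\xB^*(\thetaB_t)$. Writing $\bar g_t = \Ebb[g_t\,|\,\thetaB_t]$ for the full-batch gradient at the approximate maximizer, I would first derive descent in $\norm{\bar g_t}^2$ (the variance gap to $g_t$ being absorbed by $\sigma^2$ from Assumption~\ref{assum:stochastic_variance_paper}), and then convert to $\norm{\down L(\thetaB_t)}^2$ using the deterministic bias $\bar g_t - \down L(\thetaB_t)$. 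This bias is exactly the difference of $\thetaB$-gradients at the two perturbations, which Assumption~\ref{assum:continuous_paper} bounds by $L_{\thetaB\xB}\norm{\xB^{K_t} - \xB^*(\thetaB_t)} \le L_{\thetaB\xB}\delta$, where $\delta$ is read as the inner-maximization approximation error (distance to the optimal adversarial example). The conversion uses Young's inequality twice---once on the cross term $\langle \down L(\thetaB_t), \bar g_t\rangle$ and once when passing from $\norm{\bar g_t}^2$ back to $\norm{\down L(\thetaB_t)}^2$---and each application peels off an $L_{\thetaB\xB}^2\delta^2$ contribution, whose accumulation (together with the $1/\beta$ step-size cap) produces the constant $5$ in front of $L_{\thetaB\xB}^2\delta^2$.

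The final step is to telescope the resulting per-iteration inequality over $t = 0, \ldots, T-1$, use $L(\thetaB_T) \ge \min_{\thetaB} L(\thetaB)$ so that the telescoped left side is at most $\Delta = L(\thetaB_0) - \min_{\thetaB} L(\thetaB)$, and substitute the prescribed step size $\eta_t = \min(1/\beta, \sqrt{\Delta/(TL\sigma^2)})$. The $1/\beta$ cap guarantees $1 - \frac{\beta}{2}\eta_t \ge \frac12$, so the quadratic $\frac{\beta}{2}\eta_t^2\norm{\bar g_t}^2$ term is dominated by the linear descent term, while the $\sqrt{\Delta/(T\beta\sigma^2)}$ branch balances the $\Delta/(\eta T)$ and $\beta\eta\sigma^2$ contributions to give the $4\sigma\sqrt{\beta\Delta/T}$ rate (taking $L=\beta$). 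The main obstacle is the careful bookkeeping in the second step: because the algorithm never evaluates the true gradient $\down L(\thetaB_t)$, every use of the stochastic gradient must simultaneously absorb the inner-maximization bias, and one must verify that the strong-concavity constant $\mu$ genuinely controls the maximizer displacement $\norm{\xB^{K_t} - \xB^*}$ so that it can be uniformly bounded by $\delta$. Everything else is a routine combination of smoothness, Young's inequality, and step-size optimization.
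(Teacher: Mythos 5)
Your proposal follows essentially the same route as the paper's appendix proof: the paper likewise establishes $\beta$-smoothness of $L(\thetaB)$ with $\beta = L_{\thetaB\xB}L_{\xB\thetaB}/\mu + L_{\thetaB\thetaB}$ (its Lemma~1, citing the FOSC paper for the Danskin-type argument you sketch), bounds the inner-approximation bias by $\norm{\tilde{g}(\thetaB)-g(\thetaB)}_2 \leq L_{\thetaB\xB}\delta$ (its Lemma~2), and then runs exactly your descent-lemma plus Young's-inequality bookkeeping and telescoping with $\eta_t = \min(1/\beta, \sqrt{\Delta/(T\beta\sigma^2)})$, confirming your reading $L=\beta$ in the theorem statement. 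The only cosmetic differences are that you center the noise at the full-batch gradient of the approximate maximizer whereas the paper centers it at the stochastic gradient of the exact maximizer (where Assumption~3 literally applies, so your variance term would need a trivial recentering costing only further $L_{\thetaB\xB}^2\delta^2$ terms), that your first-order condition $\down_{\xB}\ell(\thetaB,\xB^*(\thetaB))=0$ should be the constrained variational inequality since the maximizer may lie on the boundary of $\Xcal_i$ (the Lipschitz-maximizer bound survives with the same constant), and that the paper simply \emph{defines} $\delta$ as the achieved distance to the maximizer rather than verifying it, so your final worry is not needed for the theorem as stated.
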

The detailed notations, assumptions and proof are shown in the Appendix due to limited space. This theorem proves that our algorithm can converge under a suitable selection of learning rate.

The remainder of this section serves to motivate and justify, both qualitatively and quantitatively, the annealing method in Amata from an optimal control viewpoint. We will try to answer this question in Section~\ref{sec:optimal_control_fomulation},~\ref{sec:landscape_analysis},~\ref{sec:criterion}:

\emph{How good is the annealing scheduling in terms of adversarial training acceleration?}

We start with a general formulation of annealed adversarial training as an optimal control problem.


\subsection{Optimal control formulation of annealed adversarial training}
\label{sec:optimal_control_fomulation}
In essence, the PGD-based adversarial training algorithm \cite{madry2017towards} is a result of a number of relaxations of the original min-max problem in Eq.~\eqref{minmax_obj}, which we will now describe.
For simplicity of presentation, let us consider just one fixed input-label pair $(\xB^{0}, y)$, since the $N$-sample case is similar. The original min-max adversarial training problem is given in~\eqref{minmax_obj} with $N=1$.
The first relaxation is to replace the outer minimization with gradient descent so that we obtain the iteration
\begin{align}
    \thetaB_{t+1} = \thetaB_{t} - \eta \nabla_{\thetaB}
    \max_{ \{\xB : \| \xB - \xB^{0} \| \leq \epsilon \} }
    \ell(h_{\thetaB_t}(\xB), y).
\end{align}
Then, the remaining maximization in each outer iteration step is replaced by an abstract algorithm
$
    \Acal_{\uB, \thetaB} : \Rcal^d \rightarrow \Rcal^d
$
which solves the inner maximization approximately. Here, we assume that the algorithm depends on the current parameters of our neural network $\thetaB$, as well as hyper-parameters $\uB$ which takes values in a closed subset $G$ of an Euclidean space. No further assumptions are placed on $G$, which may be a continuum, a countable set, or even a finite set.

This relaxation leads to the following iterations\footnote{Here we assume that the gradient with respect to $\theta$ is the partial derivative with respect to the parameters of the network $h_\theta$ and $\theta_t$ in $\Acal_{\theta_t,\uB_t}$ is held constant. This is the case for the PGD algorithm. Alternatively, we can also take the total derivative, but this leads to different algorithms. }
\begin{equation}\label{eq:general_relaxation}
    \thetaB_{t+1} = \thetaB_{t} - \eta \nabla_{\thetaB}
    \ell(h_{\thetaB_t}(\Acal_{\thetaB_t,\uB_t} ), y).
\end{equation}
Eq.~\eqref{eq:general_relaxation} represents a general formulation of annealed adversarial, of which Algorithm~\ref{algo:aal} is an example with $\Acal_{\theta_t,u_t}$ being the inner PGD loop and $\uB_t = \{ \alpha_t, K_t \}$ are the hyper-parameters we pick at each $t$ step.
The function $t\mapsto \uB_t$ is an \emph{annealing schedule} for the hyper-parameters. How to pick an optimal schedule can be phrased as an optimal control problem.

To make analysis simple, we will take a continuum approximation assuming that the outer loop learning rate $\eta$ is small. This allows us to replace~\eqref{eq:general_relaxation} by an ordinary differential equation or gradient flow with the identification $s\approx t\eta$:
\begin{equation}\label{eq:general_relaxation_cts}
    \dot{\thetaB}_{s} = - \nabla_{\thetaB}
    \ell(h_{\thetaB_s}(\Acal_{\thetaB_s,\uB_s} ), y).
\end{equation}
Here, the time $s$ is a continuum idealization of the outer loop iterations on the trainable parameters in the model.
We consider two objectives in designing the annealing algorithm: on a training interval $[T_1,T_2]$ in the outer loop, we want to minimize the loss under adversarial training measured by a real-valued function $\Phi(\thetaB)$ while also minimizing the training cost associated with each inner algorithm loop under the hyper-parameter $\uB$, which is measured by another real-valued function $R(\uB)$. An optimal annealing algorithm can then be defined as a solution to the following problem:
\begin{equation}\label{eq:cts_control_problem}
    \begin{aligned}
        &\min_{\uB_{T_1:T_2}}
        \Phi(\thetaB_T)
        + \int_{T_1}^{T_2} R(\uB_s) ds
        \\
        &\text{subject to:} \quad \dot{\thetaB}_{t} = F(\thetaB_s, \uB_s)\\
        &\quad \text{and} \quad
        F(\thetaB_s, \uB_s) := - \nabla_{\thetaB}
        \ell(h_{\thetaB_s}(\Acal_{\thetaB_s,\uB_s} ), y),
    \end{aligned}
\end{equation}
where we have defined the shorthand $\uB_{T_1:T_2} = \{\uB_{s}:s\in[T_1,T_2]\}$. In this paper, we take $\Phi$ to be the DNN's prediction loss given an adversarial example (adversarial robustness), and set $R(\uB_s) = \gamma K_s$, where $K_s$ is the number of inner PGD steps at outer iteration number $s$, and $\gamma$ is the coefficient for trade-off between adversarial robustness and training time. This is to account for the fact that when $K_s$ increases, the cost of the inner loop training increases accordingly. The integral over $s$ of $R$ is taken so as to account for the total computational cost corresponding to a choice of hyper-parameters $\{ \uB_s \}$. The objective function taken as a sum serves to balance the adversarial robustness and computational cost, with $\gamma$ as a balancing coefficient.

Problem~\eqref{eq:cts_control_problem} belongs to the class of Bolza problems in optimal control, and its necessary and sufficient conditions for optimality are well-studied.
In this paper, we will use a necessary condition, namely the Pontryagin's maximum principle, in order to motivate our annealing algorithm and derive a criterion to test its approximate optimality. For more background on the theory of calculus of variations and optimal control, we refer the reader to~\cite{boltyanskii1960theory,bertsekas1995dynamic}.




\begin{theorem}[Pontryagin's Maximum Principle (PMP)]\label{thm:pmp}
    Let $\uB^*_{T_1:T_2}$ be a solution to~\eqref{eq:cts_control_problem}. Suppose $F(\thetaB, \uB)$ is Lipschitz in $\thetaB$ and measurable in $\uB$.
    Define the Hamiltonian function
    \begin{align}
        H(\thetaB, \pB, \uB) = \pB^\top F(\thetaB, \uB)
        - R(\uB)
    \end{align}
    Then, there exists an absolutely continuous co-state process $\pB^*_{T_1:T_2}$ such that
    \begin{align}
        &\dot{\thetaB}^*_s = F(\thetaB^*_s, \uB^*_s)
        \qquad\qquad\qquad \thetaB^*_{T_1} = \thetaB_{T_1} \label{eq:pmp_state} \\
        &\dot{\pB}^*_s = - \nabla_{\thetaB}
        H(\thetaB^*_s, \pB^*_s, \uB^*_s)
        \qquad \pB^*_{T_2} = - \nabla_{\thetaB} \Phi(\thetaB^*_{T_2}) \label{eq:pmp_costate} \\
        &H(\thetaB^*_s, \pB^*_s, \uB^*_s) \geq H(\thetaB^*_s, \pB^*_s, \vB)
        \quad \forall
        \vB \in G, s\in[T_1,T_2] \label{eq:pmp_maxh}
    \end{align}
\end{theorem}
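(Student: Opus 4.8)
The statement is the classical Pontryagin Maximum Principle for a Bolza problem, so the plan is to follow the standard \emph{needle (spike) variation} argument, which is exactly the technique that accommodates an arbitrary control set $G$ (continuum, countable, or finite) without any convexity or interiority hypotheses. First I would dispose of the two easy claims. The state equation~\eqref{eq:pmp_state} is just the dynamics constraint evaluated along the candidate optimum, so it holds by the definition of $\thetaB^*$. I would then \emph{define} $\pB^*$ to be the unique absolutely continuous solution of the linear terminal-value problem $\dot{\pB}^*_s = -[\nabla_{\thetaB} F(\thetaB^*_s,\uB^*_s)]^\top \pB^*_s$ with $\pB^*_{T_2} = -\nabla_{\thetaB}\Phi(\thetaB^*_{T_2})$. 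Since $R$ is independent of $\thetaB$, we have $\nabla_{\thetaB} H = [\nabla_{\thetaB}F]^\top \pB$, so this coincides with~\eqref{eq:pmp_costate}. The entire substantive content then reduces to the maximum condition~\eqref{eq:pmp_maxh}.

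For the maximum condition I would fix any $\vB \in G$ and any Lebesgue point $\tau \in (T_1,T_2)$ of $s \mapsto (F(\thetaB^*_s,\uB^*_s),R(\uB^*_s))$, and introduce the needle control $\uB^\varepsilon_s = \vB$ on $[\tau-\varepsilon,\tau]$ and $\uB^\varepsilon_s = \uB^*_s$ elsewhere. Writing $\thetaB^\varepsilon$ for the induced trajectory, a Gronwall estimate shows the trajectories agree on $[T_1,\tau-\varepsilon]$ and that the spike injects a first-order displacement $\thetaB^\varepsilon_\tau - \thetaB^*_\tau = \varepsilon\,[F(\thetaB^*_\tau,\vB) - F(\thetaB^*_\tau,\uB^*_\tau)] + o(\varepsilon)$. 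Because both controls coincide on $[\tau,T_2]$, the rescaled limit $\boldsymbol{\xi}_s := \lim_{\varepsilon\to 0}(\thetaB^\varepsilon_s - \thetaB^*_s)/\varepsilon$ solves the variational equation $\dot{\boldsymbol{\xi}}_s = \nabla_{\thetaB}F(\thetaB^*_s,\uB^*_s)\,\boldsymbol{\xi}_s$ on $[\tau,T_2]$ with initial value $\boldsymbol{\xi}_\tau = F(\thetaB^*_\tau,\vB) - F(\thetaB^*_\tau,\uB^*_\tau)$.

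The key algebraic step is the adjoint identity: by construction $\frac{d}{ds}\big((\pB^*_s)^\top \boldsymbol{\xi}_s\big) = 0$ on $[\tau,T_2]$, since the two $\nabla_{\thetaB}F$ contributions cancel, whence $(\pB^*_{T_2})^\top\boldsymbol{\xi}_{T_2} = (\pB^*_\tau)^\top \boldsymbol{\xi}_\tau$. Writing $J(\uB)$ for the objective in~\eqref{eq:cts_control_problem} and expanding to first order, optimality gives $0 \le J(\uB^\varepsilon) - J(\uB^*) = \varepsilon\,[\nabla_{\thetaB}\Phi(\thetaB^*_{T_2})^\top \boldsymbol{\xi}_{T_2} + R(\vB) - R(\uB^*_\tau)] + o(\varepsilon)$. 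Substituting the terminal condition $\nabla_{\thetaB}\Phi(\thetaB^*_{T_2}) = -\pB^*_{T_2}$ together with the conserved quantity converts this into $(\pB^*_\tau)^\top F(\thetaB^*_\tau,\uB^*_\tau) - R(\uB^*_\tau) \ge (\pB^*_\tau)^\top F(\thetaB^*_\tau,\vB) - R(\vB)$, which is exactly $H(\thetaB^*_\tau,\pB^*_\tau,\uB^*_\tau) \ge H(\thetaB^*_\tau,\pB^*_\tau,\vB)$. Since $\vB \in G$ and almost every $\tau$ were arbitrary, \eqref{eq:pmp_maxh} follows.

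The main obstacle is regularity bookkeeping rather than the algebra. Both the adjoint and variational equations require $F$ to be (at least continuously) differentiable in $\thetaB$, whereas the hypothesis asserts only Lipschitz continuity; I would either strengthen the assumption to $C^1$-in-$\thetaB$ (which holds for the smooth loss and network used in Amata) or replace gradients by Clarke generalized gradients. The delicate analytic points are (i) justifying the $o(\varepsilon)$ expansions of the state displacement and the running-cost increment uniformly, which is precisely where the Lebesgue-point choice of $\tau$ and the Gronwall argument are needed, and (ii) verifying differentiability of the flow in $\varepsilon$ so that $\boldsymbol{\xi}$ exists. Granting these, the result is standard, and one may alternatively invoke the classical PMP of~\cite{boltyanskii1960theory,bertsekas1995dynamic} after checking the stated Lipschitz and measurability conditions.
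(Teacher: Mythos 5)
The paper contains no proof of Theorem~\ref{thm:pmp} to compare against: it invokes the PMP as a classical result for Bolza problems and simply refers the reader to \cite{boltyanskii1960theory,bertsekas1995dynamic}. Your needle-variation argument is precisely the classical proof that those references contain, and its core is correct: the state equation is definitional, the costate is well-defined as the solution of a linear terminal-value ODE, the conservation identity $\frac{d}{ds}\bigl((\pB^*_s)^\top \boldsymbol{\xi}_s\bigr)=0$ follows from the cancellation of the two $\nabla_{\thetaB}F$ terms, and the first-order expansion of $J(\uB^\varepsilon)-J(\uB^*)$ at a Lebesgue point, combined with $\pB^*_{T_2}=-\nabla_{\thetaB}\Phi(\thetaB^*_{T_2})$, delivers exactly the Hamiltonian inequality~\eqref{eq:pmp_maxh}. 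You also rightly flag that the stated hypothesis (Lipschitz in $\thetaB$, measurable in $\uB$) is weaker than what the argument needs --- indeed the theorem's own costate equation~\eqref{eq:pmp_costate} already presupposes $\nabla_{\thetaB}H$ exists, so the looseness is in the paper's statement, and your proposed repairs ($C^1$-in-$\thetaB$, which holds in the Amata setting, or Clarke gradients) are the standard ones.

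Two technical points deserve mention, both inherited from the paper's phrasing rather than errors in your plan. First, your closing step ``since $\vB\in G$ and almost every $\tau$ were arbitrary'' hides a quantifier subtlety: for each fixed $\vB$ the inequality holds off a null set of $\tau$ that \emph{depends on} $\vB$, so when $G$ is a continuum you must pass through a countable dense subset of $G$ and use continuity (or at least upper semicontinuity) of $\uB\mapsto H(\thetaB,\pB,\uB)$ to conclude; mere measurability in $\uB$, as hypothesized, does not suffice for this last step, and classical treatments assume continuity in the control or use a measurable-selection argument. Second, needle variations only yield~\eqref{eq:pmp_maxh} for almost every $s\in[T_1,T_2]$, whereas the theorem asserts it for all $s$; the ``for all'' version requires an additional continuity/regularization of $\uB^*$ argument (or simply restating a.e., as standard references do). Neither issue undermines your approach --- they are exactly the bookkeeping the cited classical proofs carry out --- but a complete write-up should address them explicitly rather than strengthen the conclusion silently.
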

In short, the maximum principle says that a set of optimal hyper-parameter choices $\{ \uB^*_s \}$ (in our specific application, these are the optimal choices of inner-loop hyper-parameters as the training proceeds) must globally maximize the Hamiltonian defined above for \emph{each} outer iteration. The value of the Hamiltonian at each layer depends in turn on coupled ODEs involving the states and co-states.
This statement is especially appealing for our application because unlike first-order gradient conditions, the PMP holds even when our hyper-parameters can only take a discrete set of values, or when there are non-trivial constraints amongst them.

We now show how the optimal control formulation and the PMP motivates and justifies the proposed Amata both qualitatively and quantitatively.



\begin{figure}[!ht]
\begin{center}
\setlength\tabcolsep{0.01pt}
\begin{tabular}{cc}
\includegraphics[width=0.48\columnwidth]{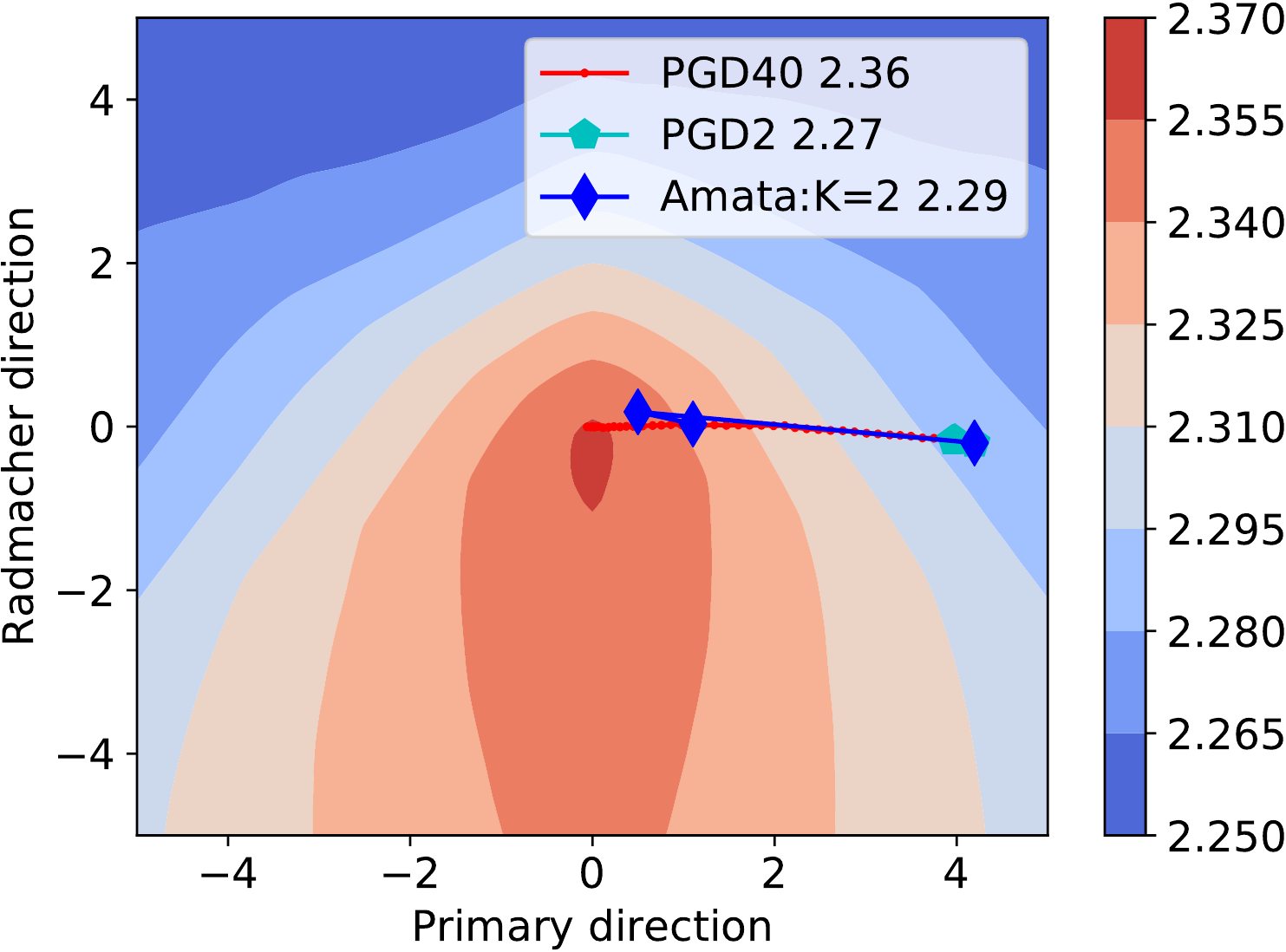} &
\includegraphics[width=0.48\columnwidth]{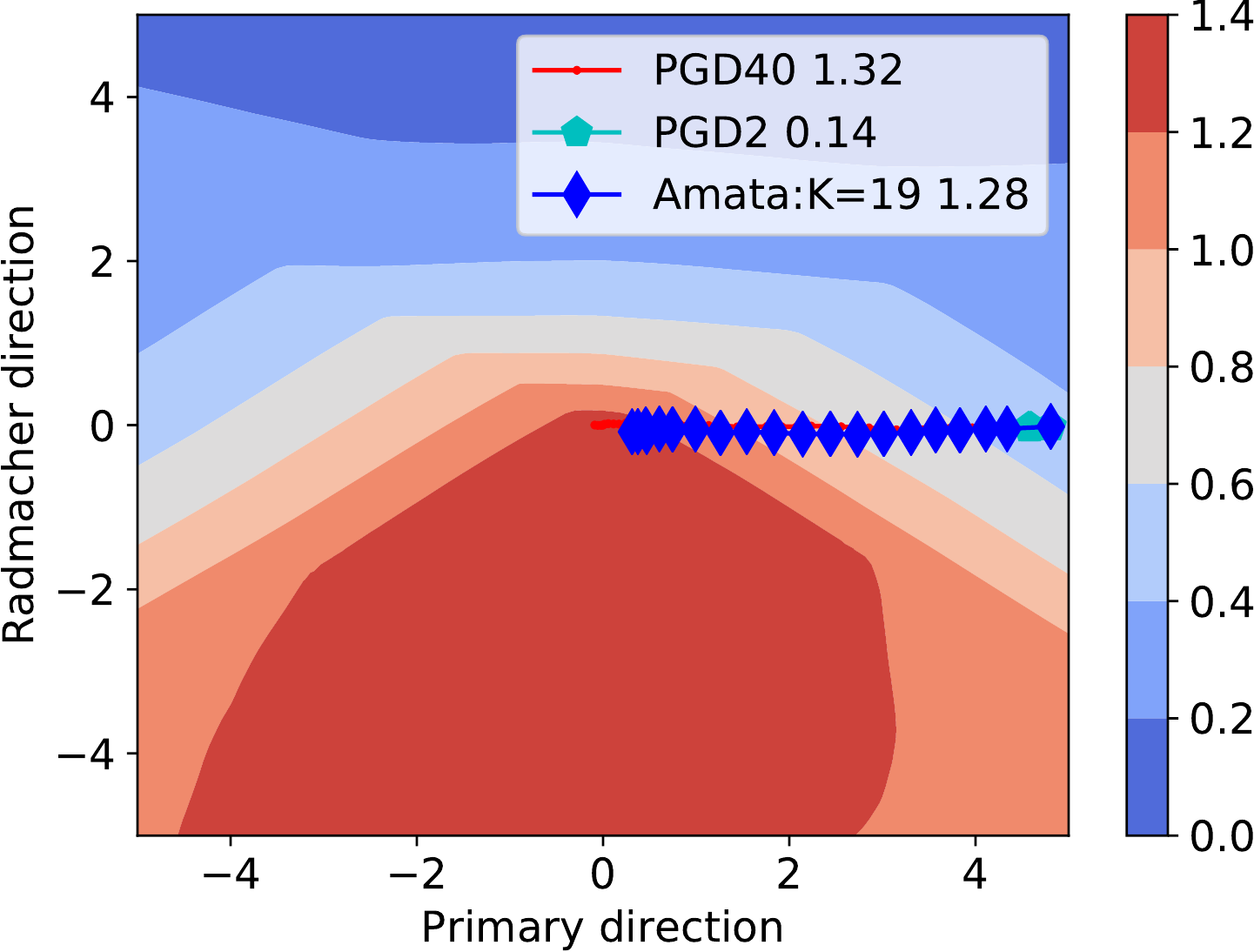} \\
\end{tabular}
\end{center}
\caption{Visualization of inner maximizations landscape and PGD-40, PGD-2, and Amata's trajectories at Epoch 1 (\textbf{Left}) and Epoch 10 (\textbf{Right}). $K$ is the Amata's number of steps and the numerics indicate the adversarial loss obtained by different methods (the higher the better for adversarial perturbations).
Applying standard PGD-2 for acceleration cannot find strong adversarial example at Epoch 10. However, Amata can adaptively balance the number of steps and the step size to achieve good trade-off between time costs and robustness as justified before by theoretic analysis. (Better viewed in the zoom-in mode). More details are in the Appendix.
}
\label{fig:vis_trajectory_main}
\end{figure}

\subsection{Qualitative justification of Amata via landscape analysis}
\label{sec:landscape_analysis}

We now show that the form of an effective annealing schedule depends on how the loss landscape with respect to the data (i.e. landscape for the inner loop maximization) changes as training proceeds. Let us first visualize the change in landscape for actual neural network training using a similar method as \cite{Ali2019}. In Figure~\ref{fig:vis_trajectory_main}, we observe that as the outer training proceeds, the loss landscape with respect to the data becomes steeper, and thus we require smaller step sizes and/or more steps in the inner maximization to find good quality adversarial examples.

Let us now make this observation more precise using the maximum principle. In general, it is not possible to solve the PMP Eq.~(\ref{eq:pmp_state}-\ref{eq:pmp_maxh}) in closed form. However, we can consider representative loss landscapes qualitatively similar to Fig.~\ref{fig:vis_trajectory_main} where such equations are solvable in order to motivate our proposed algorithm. To this end, consider a 1D optimization problem with loss function
\begin{equation}
    \ell(\theta,x) = \frac{\theta ^2}{2}-\frac{(x-\theta )^2}{\theta ^2+1},
\end{equation}
where $x$ plays the role of data and $\theta$ plays the role of
trainable parameters. We will assume that the data point $x=0$ so that the non-robust loss is
$
    l(\theta, 0) = \frac{\theta ^2}{2}+\frac{1}{\theta ^2+1}-1,
$
which has two minima at $\theta = \pm \sqrt{\sqrt{2} - 1}$. However, the robust loss is
$
    \tilde{\ell}(\theta) = \max_{x \in \Rcal} l(\theta, x)
    = \frac{1}{2} \theta^2,
$
which has a unique minimum at $\theta=0$.
The key motivation behind this example is the fact that the loss landscape for $x$ becomes more steep (i.e. $|d\ell/dx|$ increases) as training in $\theta$ proceeds towards the unique minimum of the robust loss at $0$, just like in actual adversarial training in Figure~\ref{fig:vis_trajectory_main}.
Effectively, this means that as training proceeds, it becomes more important to ensure the stability/convergence of the inner loop training. Given a limited computational budget, one should then allocate more of them towards the later part of training, hence giving rise to an annealing schedule that gradually increases the complexity of the inner loop iterations. Indeed, this can be made precise for this example - if we apply two-loop adversarial training with $K_t$ inner loop steps at step size $\alpha_t$ (with $\tau = \alpha_t K_t$ fixed, c.f. Algorithm~\ref{algo:aal}) to $\ell$, we can solve the PMP explicitly in this case to obtain the optimal schedule for $K_t$.
\begin{equation}\label{eq:toy_soln}
    {K_t^*}= \frac{\tau}{{\alpha_t^*}}
    = \frac{2\tau}{\theta_0^2 e^{-2t}+1}
    = \frac{2 \tau }{\theta_0 ^2+1}
    +
    \left(
	    \frac{4 \theta_0 ^2 \tau }{\left(\theta_0 ^2+1\right)^2}
    \right)
    t + \mathcal{O}(t^2),
\end{equation}
where the last step is valid for small $t$. This is a schedule that gradually increases the number of inner maximization steps as the outer loop $t$ proceeds, and  motivates our annealing choice in Amata. Note that this example is qualitative, so we do not adopt an identical schedule in Amata, but a linear approximation of it (right hand side in~\eqref{eq:toy_soln}) that also increases in time. It is clear from the derivation that the origin of this increasing schedule is that the curvature $l_{xx}(\theta, x)$ increases as $\theta\rightarrow \theta^* = 0$, hence this motivates the Amata algorithm in view of the numerical observations in Fig~\ref{fig:vis_trajectory_main}.

\subsection{Quantitative justification of Amata via a criterion of approximate optimality}
\label{sec:criterion}

Besides qualitative motivations, it is desirable to have a criterion to test the approximate optimality of any chosen annealing schedule. In this subsection, we develop a quantitative criterion for this purpose, based on the PMP. Given any hyper-parameter choice $\uB_{T_1:T_2}$ over the training interval, let us define its ``distance'' from optimality as
\begin{align}\label{eq:criterion}
    C(\uB_{T_1:T_2}) &= \frac{1}{T_2 - T_1} \int_{T_1}^{T_2}
        \max_{\vB \in G} H(\thetaB^{\uB}_s, \pB^{\uB}_s, \vB)
        \nonumber \\ &-
        H(\thetaB^{\uB}_s, \pB^{\uB}_s, \uB_s)
    ds
\end{align}
where $\{ \thetaB^{\uB}_s, \pB^{\uB}_s : s\in[T_1, T_2]\}$ represents the solution of the Eq.~\eqref{eq:pmp_state} and~\eqref{eq:pmp_costate} with $\uB_s$ in place of $\uB^*_s$.
Observe that $C(\uB_{T_1:T_2}) \geq 0$ for any $\uB_{T_1:T_2}$ with equality if and only if $\uB_{T_1:T_2}$ satisfies the PMP for almost every $s\in[T_1,T_2]$. Hence, $C$ can be used as a measure of deviation from optimality. When $C$ is small, our annealing strategy $\{ \uB_s \}$ is close to at least a locally optimal strategy, where as when it is large, our annealing strategy is far from an optimal one. For ease of calculation, we can further simplify $C$ by Taylor expansions, assuming $T_2-T_1=\eta$ is small, yielding
(See Appendix~\ref{sec:app_criterion})
\begin{equation}\label{eq:simplified_criterion}
    \begin{aligned}
        C(\uB_{t:t+\eta})
        &\approx
        \max_{\alpha, K}
        \left\{
            \| \nabla_{\thetaB} \ell ( h_{\thetaB_{t}}
                [\Acal_{\thetaB_t,\alpha,K}(x)], y)\|^2 - \gamma K
        \right\}
        \\ &-
        \left(
            \| \nabla_{\thetaB} \ell ( h_{\thetaB_{t}}
            [\Acal_{\thetaB_t,\alpha_t, K_t}(x)], y)\|^2 - \gamma K_t
        \right)
    \end{aligned}
\end{equation}
with $\Acal_{\thetaB_t,\alpha,K}$ denoting the inner PGD loop starting from $x$ with $K$ steps and step size $\alpha$. Criterion~\eqref{eq:simplified_criterion} is a greedy version of the general criterion derived from the maximum principle. It can be used to either evaluate the near-term optimality of some choice of hyper-parameters $\uB$, or to find an approximately optimal hyperparameter greedily by solving $C(\uB,t) = 0$ for $\uB$, which amounts to maximizing the first term.
In this paper, we use Bayesian optimization\footnote{Implementations can be found in https://github.com/hyperopt/hyperopt} to perform the maximization in~\eqref{eq:simplified_criterion} to evaluate strategies from the controllable space $G$.

\paragraph{Comparison with FOSC criterion:}
\cite{wang19i} proposed an empirical criterion to measure the convergence of inner maximization:
\begin{equation}
    \text{FOSC}(\xB)
    =
    \epsilon \norm{\nabla_{\xB} \ell(h_{\thetaB}(\xB), y)}
    -
    \langle
        \xB -\xB^{0},  \nabla_{\xB} \ell(h_{\thetaB}(\xB), y)
    \rangle
\end{equation}


There are some similarities between the proposed criterion and FOSC when the computational cost term $R$ is not considered. For example, when the stationary saddle point is achieved, both the proposed criterion and FOSC reach the minimum. However, the proposed criterion is different from FOSC in the following aspects:
1) The proposed criterion is derived from the optimal control theory, whereas FOSC is concluded from empirical observations. 2) The proposed criterion takes computation costs into consideration, whereas FOSC only considers the convergence of adversarial training. 3) The proposed criterion is based on the gradient of DNN parameters, whereas FOSC is based on the gradient of the input. Measuring the gradient of DNN parameters is arguably more suitable for considering robustness-training time trade-off as the variance of the DNN parameters is much larger than the input during training.


\subsection{Evaluation of Amata using $C$.}
We now use the numerical form of the optimal control criterion~\eqref{eq:simplified_criterion} to analyze Amata for robustness and computational efficiency trade-off. We use the LeNet architecture \footnote{Implementations can be found in https://github.com/pytorch/examples/blob/master/mnist/main.py} for MNIST classification as an example. We set the $\gamma$ as 0.04 for this criterion, and show the result in Table~\ref{tab:numeric_comparison}. From this Table, we observe that $C$ and performance (robustness, time) are correlated in the expected way, and that Amata has lower $C$ values and better performances. Furthermore, $C$ takes into account both robustness and computational cost as seen in the first two rows, where a lower $C$ value is associated with similar robustness but lower time cost. Hence, $C$ can help us choose a good annealing schedule.

Although computing the exact optimal control strategy for DNN adversarial training is expensive for real-time tasks, with the criterion derived from the PMP, we are able to numerically compare the optimality of different adversarial training strategies. From this numerical evaluation, we have demonstrated that the proposed Amata algorithm is close to an optimal adversarial training strategy, or at least one that satisfies the maximum principle. We will show that our algorithm can achieve similar or even better adversarial accuracy much faster with empirical experiments on popular DNN models later in Experiments section.

\vspace{-0.3cm}
\begin{table}[!ht]
\centering
\captionof{table}{Comparison of adversarial training strategies. Amata setting 1: $K_{min}=5,K_{max}=40$, Amata setting 2: $K_{min}=10,K_{max}=40$.}
    \begin{tabular}{cccc}\hline
        Strategy                    & C  &Robustness &Time \\ \hline
        \textit{Amata(Setting 1)} & \textit{\textbf{0.54}}     & \textit{\textbf{91.47\%}}   &\textit{697.73s}\\
        \textit{Amata(Setting 2)} & \textit{\textbf{0.68}}   & \textit{\textbf{91.46\%}}   &\textit{760.16s} \\
        \hline
        PGD-10                      &7.82      & 68.07\%   &307.57s\\
        PGD-20                      &1.52       & 85.23\%   &567.11s\\
        PGD-40                      &1.20       & 90.56\%   &1086.31s\\ \hline
      \end{tabular}
\label{tab:numeric_comparison}
\end{table}

\section{Experiments}
\label{sec:exp}
To demonstrate the effectiveness of Amata mechanism, experiments are conducted on MNIST(in the appendix), CIFAR10, Caltech256 and the large-scale ImageNet dataset. With less computational cost, the proposed Amata method trained models achieve comparable or slightly better performance than the models trained by other methods, such as PGD. In our experiment, PyTorch 1.0.0 and a single GTX 1080 Ti GPU were used for MNIST, CIFAR10, and Caltech256 experiment, while PyTorch 1.3.0 and four V100 GPUs were used for the ImageNet experiment. \textcolor{black}{Note that different from research focus on improving adversarial accuracy, for efficient adversarial training research, it is important to run algorithms in the same hardware and driver setting for fair comparison.}
We evaluate the adversarial trained networks against PGD and Carlini-Wagner (CW) attack \cite{carlini2017towards}. In addition, Amata can also be seamlessly incorporated into existing adversarial training acceleration algorithms, such as you only propogate once(YOPO, \cite{zhang2019you}), adversarial training for free (Free, \cite{Ali2019}), and fast adversarial training (Fast, \cite{Wong2020Fast}). As an ablation study, results with other annealing schemes, such as exponential one, are shown in the Appendix. We first evaluate Amata on standard datasets and then incorporate Amata into other adversarial training acceleration algorithms.

\subsection{Evaluation of Amata on standard datasets}

\paragraph{CIFAR10 classification}
For this task, we use the PreAct-Res-18 network \cite{madry2017towards}. PGD, FOSC, \textcolor{black}{FAT}, and Amata are tested. $\tau$ is set as 20/255 for Amata. The clean and robust error of PGD-10 and Amata are shown in Figure~\ref{fig:timeerror_curve} Left. The proposed Amata method takes 3045 seconds to achieve less than 55\% robust error while for PGD-10, it takes 6944 seconds. FOSC takes 8385 seconds to achieve similar accuracy. \textcolor{black}{During the experiment, FAT cannot achieve 55\% robust error. This is because FAT always generate adversarial examples near the decision boundaries and the adversarial loss might be too small to make the adversarial training effective.} Furthermore, we run the PGD, FOSC and Amata experiment for 100 epochs until full convergence, with showing the clean accuracy, PGD-20 attack accuracy, CW attack accuracy, and the consumed time in Table~\ref{table:cifar10}. We can see that Amata outperforms PGD-10 with reducing the consumed time to 61.9\% and achieves comparable accuracy to FOSC with reducing the consumed time to 54.8\%.

\vspace{-1cm}
\begin{table*}[!ht]
    \caption{CIFAR10 adversarial training convergence results. We run all algorithms on the same computation platform for fair comparison.
    }
    \label{table:cifar10}
    \centering
    \small
    \begin{tabular}{c|c|c|c|c}
    \hline
    Training methods & Clean accuracy	 &PGD-20 Attack  &CW Attack &Time (Seconds)\\
    \hline
    ERM    & 94.75\%          &0.0\%         &  0.23\%    &2099.58\\
    \hline
    PGD-2         & 90.16\%            &31.70\%          &  13.36\%   &6913.36\\
    PGD-10         &  85.27\%            &47.31\%        &  51.73\%   &23108.10\\
    \hline
    FAT\citep{zhang2020attacks} &89.30\% &41.34\%       & 41.16\%      &14586.08\\
    \hline
    FOSC\citep{wang19i} &85.29\%          &47.75\%       &47.70\%      &26126.98\\
    \hline
    \textit{Amata($K_{min}=2$, $K_{max}=10$)}             & \textit{85.52\%}            &\textit{47.62\%}     & \textit{52.94\%}  &  \textit{14308.96}\\
    \hline
    \end{tabular}
\end{table*}
\vspace{-1cm}

\begin{figure}[!ht]
\begin{center}
\setlength\tabcolsep{0.2pt}
\begin{tabular}{cc}
\includegraphics[width=0.5\columnwidth]{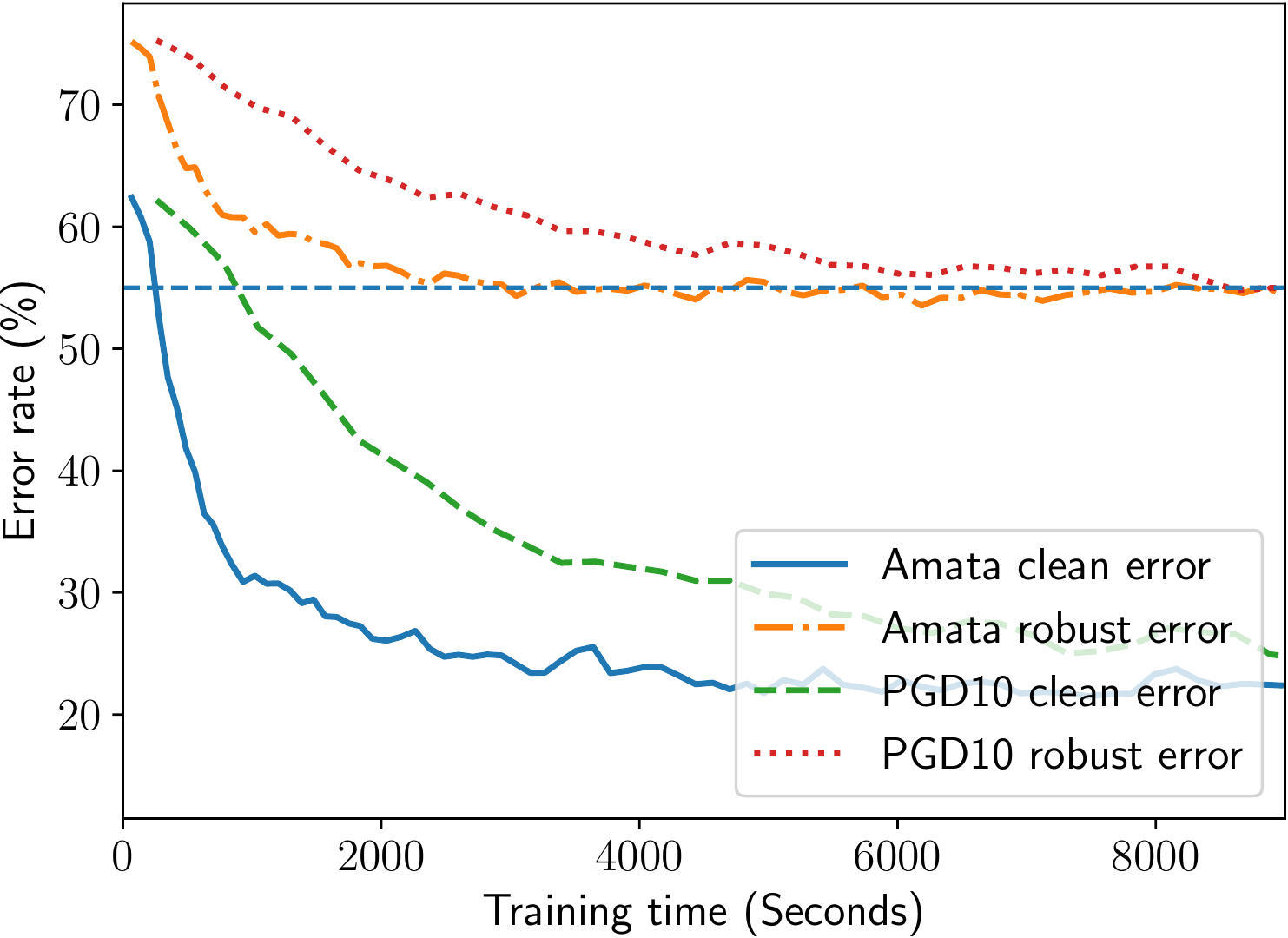} &
\includegraphics[width=0.5\columnwidth]{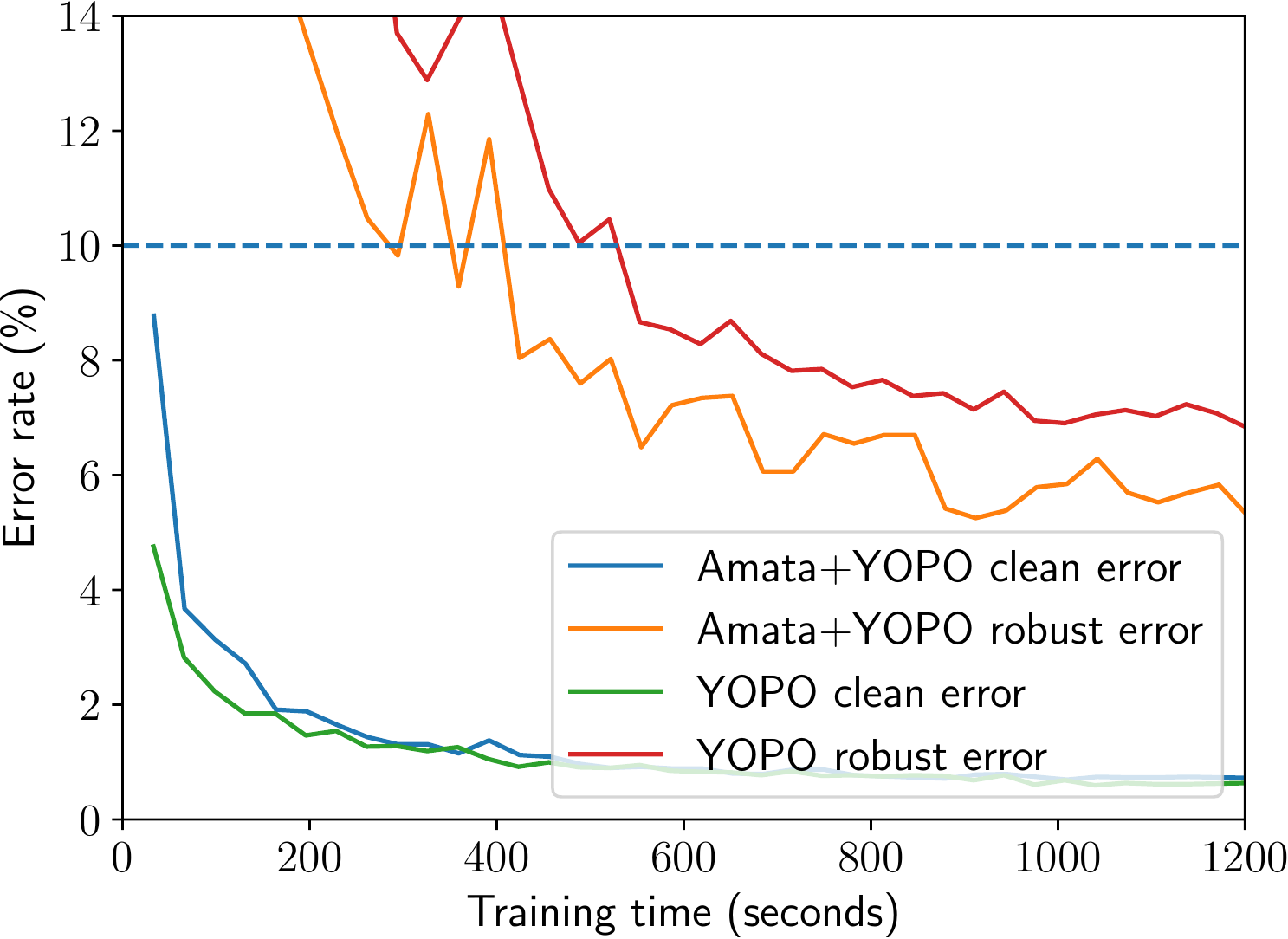} \\
\end{tabular}
\end{center}
\caption{\textbf{Left:} The clean and robust error of Amata and PGD-10 in the CIFAR10 validation for achieving less than 55\% robust error. We use Amata with the setting $K_{min}=2$ and $K_{max}=10$. \textbf{Right:} The clean and robust error of training with YOPO and Amata+YOPO on MNIST \footnote{We found the YOPO's code for CIFAR10 experiment is less stable to random seed}. We use Amata with the setting $K_{min}=2$ and $K_{max}=5$. (Better viewed in the zoom-in mode)}
\label{fig:timeerror_curve}
\end{figure}

\paragraph{Caltech256 classification}
Experiment is also conducted on Caltech256 dataset \cite{Caltech256} with ResNet-18 network. It consists of 30,607 images spanning 257 object categories. Object categories are diverse, including animals, buildings and musical instruments. We use the same experiment setting as in \cite{ZhangZ19}. As it is already known that FOSC cannot lead to acceleration, we do not include FOSC in this experiment. For achieving the adversarial accuracy of 28\%, Amata takes 3403 seconds while PGD-20 takes 4956 seconds. Furthermore, we run the PGD-20 and Amata training for 21 epochs until full convergence. The accuracies under clean, PGD-5, PGD-20, and PGD-100 attack data are shown in Table~\ref{table:Caltech256_ImageNet}. We can see that, with saving around 30\% computational time, the proposed Amata can achieve similar accuracy to PGD-20 under up to 100 iterations of PGD attacks.

\vspace{-0.1cm}
\begin{table*}[!ht]
    \caption{Caltech256 and ImageNet results. Amata and Amata+ achieve \emph{almost the same} robustness under various strengths of attacks after convergence.}
    \vspace{-0.3cm}
    \label{table:Caltech256_ImageNet}
    \centering
    \small
    \begin{tabular}{c|c|c|c|c}
    \hline
    \multicolumn{5}{c}{Caltech256 results.} \\
    \hline
    Training methods & Clean accuracy	 &PGD-5   &PGD-20  &PGD-100 \\
    \hline
    ERM    & 83.1\%          &0.0\%         &  0.0\%   &0.0\%\\
    \hline
    PGD-20        & 65.7\%            &29.7\%          &  28.5\%   &28.5\\
    \hline
    \textit{Amata($K_{min}=10$, $K_{max}=20$)}             & \textit{66.1\%}            &\textit{29.6\%}     & \textit{28.3\%} & \textit{28.3}\\
    \hline
    \multicolumn{5}{c}{ImageNet results.} \\
    \hline
    Training methods & Clean accuracy	 &PGD-10   &PGD-20  &PGD-50 \\
    \hline
    Free    &60.57\%    &32.1\%   &31.5\%   &31.3\%\\
    \hline
   \textit{Amata($K_{min}=2$, $K_{max}=4$)+Free}             & \textit{59.7\%}            &\textit{31.8\%}     & \textit{31.2\%} & \textit{31.0}\\
    \hline
    \end{tabular}
\end{table*}

\subsection{Amata+}
\label{AmataYOPO}
The proposed Amata mechanism is largely orthogonal to existing acceleration approaches to adversarial training acceleration, and hence can be readily incorporated into them. We now demonstrate this for YOPO~\cite{zhang2019you},  adversarial training for free (Free)~\cite{Ali2019}, and fast adversarial training (Fast)~\cite{Wong2020Fast}. We name this kind of jointly implemented Amata as Amata+.

\vspace{-0.1cm}
\paragraph{Amata+YOPO}
YOPO's MNIST classification experiment\footnote{https://github.com/a1600012888/YOPO-You-Only-Propagate-Once/tree/82c5b902508224c642c8d0173e61435795c0ac42/\\experiments/MNIST/YOPO-5-10} is demonstrated as an example. For Amata incorporation, we gradually increase the $K$ and decrease the $\sigma$ in the codes that is similar to the case of modifying the PGD algorithm. The clean and robust error of YOPO and Amata+YOPO are shown in Figure~\ref{fig:timeerror_curve} Right. We can see that Amata+YOPO takes 294 seconds to reach the adversarial accuracy of 94\%, which is around the half of the time consumed by YOPO. It is also worth noting that Amata+YOPO achieves better adversarial accuracy when converged. This phenomenon corresponds to the finding in FOSC~\cite{wang19i} that too strong adversarial example is not needed at the beginning. From this example, we can see that Amata can be easily incorporated in other adversarial training acceleration algorithm to provide further acceleration and to improve adversarial accuracy.

\vspace{-0.1cm}
\paragraph{Amata+Free}
ImageNet is a large-scale image classification dataset consisting of 1000 classes and more than one million images
\cite{ILSVRC15}. Adversarial training on ImageNet is considered to be challenging due to the high computation cost \cite{Kannan2018,Cihang2018}. Recently, Ali \etal proposed the Free algorithm for adversarial training acceleration by alternatively running PGD-1 to solve the inner maximization and the outer minimization. This process is run $m=4$ times for each input data batch \footnote{https://github.com/mahyarnajibi/FreeAdversarialTraining}, with four V100 GPUs. For Amata incorporation, similarly, we increase $m$ from 2 to 4 and decrease the PGD step size also by two times in the code. In the experiment, ResNet-50 is used for adversarial training. We find that it takes Amata+Free 948 minutes to achieve 30\% adversarial accuracy which saves around 1/3 computational time compared with 1318 minutes by the Free algorithm. We further run Free and Free+Amata 22 epochs for full convergence and test the obtained models with various iterations of PGD attacks. The results are shown in Table~\ref{table:Caltech256_ImageNet}. We can see that Amata can still help reducing the computational cost almost without performance degradation even when combined with the state-of-the art adversarial training acceleration algorithm on the large-scale dataset.
\vspace{-0.1cm}
\paragraph{Amata+Fast}
We also incorporate Amata into the fast adversarial training algorithm by using a weaker version of the fast adversarial training algorithm at the initial stage and then using the original version of the fast adversarial training algorithm later. The weaker version of the adversarial training algorithm is constructed by using a fixed non-zero initialization at the start. With the same setting\footnote{\url{https://github.com/locuslab/fast_adversarial/}}, Amata+Fast can achieve 72\% PGD-20 accuracy two times faster than fast adversarial training on CIFAR10. This further shows that Amata is a mechanism that can be readily incorporated into many existing algorithms.
 
\vspace{-0.1cm}
\paragraph{Amata+ATTA}
We find that Amata can be seamlessly combined with a recently proposed adversarial training method---adversarial training with transferable adversarial examples (ATTA). We follow the same setting as in \cite{zheng2019efficient}. ATTA achieves efficient adversarial training by reusing a number of adversarial perturbations calculated in previous epochs, which is controled by a hyper-parameter $\text{reset}$ in \footnote{\url{https://github.com/hzzheng93/ATTA}}. To implement Amata, we reduce $\text{reset}$ to be 2 in the first five epochs to reduce the strength of adversarial examples. Compared with Amata, Amata+ATTA can achieve 58\% PGD-20 accuracy around 1.5 times faster than ATTA.

\section{Conclusion}
We proposed a novel annealing mechanism for accelerating adversarial training that achieves comparable or better robustness with 1/3 to 1/2 the computational cost over a variety of benchmarks. Moreover, a convergence proof and a general optimal control formulation of annealed adversarial training is developed to justify its validity and performance. Our approach can also be seamlessly incorporated into existing adversarial training acceleration algorithms, such as YOPO and Adversarial Training for Free to achieve acceleration and improve performance. As a point of future work, we will explore adaptive methods for adversarial training based on the optimal control formulation (e.g. the MSA algorithm~\cite{Chernousko1982a,Li2017,li2018optimal}).

\section{Acknowledgements}
Nanyang Ye was supported in part by National Key R\&D Program of China  2017YFB1003000, in part by National Natural Science Foundation of China under Grant (No. 61672342, 61671478, 61532012, 61822206, 61832013,  61960206002, 62041205), in part by Tencent AI Lab Rhino Bird Focused Research Program JR202034, in part by the Science and Technology Innovation Program of Shanghai (Grant 18XD1401800, 18510761200), in part by Shanghai Key Laboratory of Scalable Computing and Systems.

Zhanxing Zhu was supported by Beijing Nova Program (No. 202072) from Beijing Municipal Science \& Technology Commission, and National Natural Science Foundation  of  China  (No.61806009 and 61932001),  PKU-Baidu Funding 2019BD005.
\bibliography{amata}

\begin{thebibliography}{37}
\providecommand{\natexlab}[1]{#1}
\providecommand{\url}[1]{\texttt{#1}}
\providecommand{\urlprefix}{URL }
\expandafter\ifx\csname urlstyle\endcsname\relax
  \providecommand{\doi}[1]{doi:\discretionary{}{}{}#1}\else
  \providecommand{\doi}{doi:\discretionary{}{}{}\begingroup
  \urlstyle{rm}\Url}\fi

\bibitem[{Akhtar and Mian(2018)}]{akhtar2018threat}
Akhtar, N.; and Mian, A. 2018.
\newblock {Threat of Adversarial Attacks on Deep Learning in Computer Vision: A
  Survey}.
\newblock \emph{arXiv preprint arXiv:1801.00553} .

\bibitem[{Ben-Tal and Nemirovski(1998)}]{ben1998robust}
Ben-Tal, A.; and Nemirovski, A. 1998.
\newblock {Robust convex optimization}.
\newblock \emph{Mathematics of operations research} 23(4): 769--805.

\bibitem[{Bertsekas et~al.(1995)Bertsekas, Bertsekas, Bertsekas, and
  Bertsekas}]{bertsekas1995dynamic}
Bertsekas, D.~P.; Bertsekas, D.~P.; Bertsekas, D.~P.; and Bertsekas, D.~P.
  1995.
\newblock \emph{Dynamic programming and optimal control}, volume~1.
\newblock Athena scientific Belmont, MA.

\bibitem[{Bhat and Tsipras(2019)}]{Bhat2019}
Bhat, S.; and Tsipras, D. 2019.
\newblock {Towards Efficient Methods for Training Robust Deep Neural Networks}.

\bibitem[{Boltyanskii, Gamkrelidze, and
  Pontryagin(1960)}]{boltyanskii1960theory}
Boltyanskii, V.~G.; Gamkrelidze, R.~V.; and Pontryagin, L.~S. 1960.
\newblock {The theory of optimal processes. The maximum principle}.
\newblock Technical report, TRW Space Tochnology Labs, Los Angeles, California.

\bibitem[{Buckman et~al.(2018)Buckman, Roy, Raffel, and
  Goodfellow}]{buckman2018thermometer}
Buckman, J.; Roy, A.; Raffel, C.; and Goodfellow, I. 2018.
\newblock {Thermometer Encoding: One Hot Way To Resist Adversarial Examples}.
\newblock In \emph{International Conference on Learning Representations}.
\newblock \urlprefix\url{https://openreview.net/forum?id=S18Su--CW}.

\bibitem[{Carlini and Wagner(2017)}]{carlini2017towards}
Carlini, N.; and Wagner, D. 2017.
\newblock {Towards evaluating the robustness of neural networks}.
\newblock In \emph{Security and Privacy (SP), 2017 IEEE Symposium on}, 39--57.
  IEEE.

\bibitem[{Chernousko and Lyubushin(1982)}]{Chernousko1982a}
Chernousko, F.~L.; and Lyubushin, A.~A. 1982.
\newblock {Method of successive approximations for solution of optimal control
  problems}.
\newblock \emph{Optimal Control Applications and Methods} 3(2): 101--114.
\newblock ISSN 0143-2087.

\bibitem[{Gao et~al.(2019)Gao, Cai, Li, Wang, Hsieh, and Lee}]{Ruiqi2019}
Gao, R.; Cai, T.; Li, H.; Wang, L.; Hsieh, C.; and Lee, J.~D. 2019.
\newblock Convergence of Adversarial Training in Overparametrized Networks.
\newblock \emph{CoRR} abs/1906.07916.
\newblock \urlprefix\url{http://arxiv.org/abs/1906.07916}.

\bibitem[{Goodfellow, Bengio, and Courville(2016)}]{Goodfellow2016}
Goodfellow, I.; Bengio, Y.; and Courville, A. 2016.
\newblock \emph{Deep Learning}.
\newblock The MIT Press.
\newblock ISBN 0262035618.

\bibitem[{Goodfellow, Shlens, and Szegedy(2014)}]{goodfellow2014explaining}
Goodfellow, I.~J.; Shlens, J.; and Szegedy, C. 2014.
\newblock {Explaining and harnessing adversarial examples}.
\newblock \emph{arXiv preprint arXiv:1412.6572} .

\bibitem[{Griffin, Holub, and Perona(2007)}]{Caltech256}
Griffin, G.; Holub, A.; and Perona, P. 2007.
\newblock Caltech-256 Object Category Dataset.
\newblock Technical Report 7694, California Institute of Technology.
\newblock \urlprefix\url{http://authors.library.caltech.edu/7694}.

\bibitem[{He et~al.(2017)He, Wei, Chen, Carlini, and Song}]{he2017adversarial}
He, W.; Wei, J.; Chen, X.; Carlini, N.; and Song, D. 2017.
\newblock {Adversarial example defenses: Ensembles of weak defenses are not
  strong}.
\newblock \emph{arXiv preprint arXiv:1706.04701} .

\bibitem[{Kannan, Kurakin, and Goodfellow(2018)}]{Kannan2018}
Kannan, H.; Kurakin, A.; and Goodfellow, I.~J. 2018.
\newblock Adversarial Logit Pairing.
\newblock \emph{CoRR} abs/1803.06373.
\newblock \urlprefix\url{http://arxiv.org/abs/1803.06373}.

\bibitem[{Kingma and Ba(2014)}]{kingma2014method}
Kingma, D.~P.; and Ba, J. 2014.
\newblock Adam: A Method for Stochastic Optimization.
\newblock \urlprefix\url{http://arxiv.org/abs/1412.6980}.
\newblock Cite arxiv:1412.6980Comment: Published as a conference paper at the
  3rd International Conference for Learning Representations, San Diego, 2015.

\bibitem[{Kolter and Wong(2017)}]{kolter2017provable}
Kolter, J.~Z.; and Wong, E. 2017.
\newblock {Provable defenses against adversarial examples via the convex outer
  adversarial polytope}.
\newblock \emph{arXiv preprint arXiv:1711.00851} .

\bibitem[{Li et~al.(2018)Li, Xu, Taylor, Studer, and Goldstein}]{LiHaoNIPS2018}
Li, H.; Xu, Z.; Taylor, G.; Studer, C.; and Goldstein, T. 2018.
\newblock Visualizing the Loss Landscape of Neural Nets.
\newblock In \emph{Advances in Neural Information Processing Systems 31},
  6389--6399. Curran Associates, Inc.

\bibitem[{Li et~al.(2017)Li, Chen, Tai, and E}]{Li2017}
Li, Q.; Chen, L.; Tai, C.; and E, W. 2017.
\newblock {Maximum principle based algorithms for deep learning}.
\newblock \emph{The Journal of Machine Learning Research} 18(1): 5998--6026.
\newblock ISSN 1532-4435.

\bibitem[{Li and Hao(2018)}]{li2018optimal}
Li, Q.; and Hao, S. 2018.
\newblock {An Optimal Control Approach to Deep Learning and Applications to
  Discrete-Weight Neural Networks}.
\newblock In Dy, J.; and Krause, A., eds., \emph{Proceedings of the 35th
  International Conference on Machine Learning}, volume~80 of \emph{Proceedings
  of Machine Learning Research}, 2985--2994. Stockholmsm{\"{a}}ssan, Stockholm
  Sweden: PMLR.
\newblock \urlprefix\url{http://proceedings.mlr.press/v80/li18b.html}.

\bibitem[{Li, Tai, and E(2017)}]{li2017stochastic}
Li, Q.; Tai, C.; and E, W. 2017.
\newblock {Stochastic modified equations and adaptive stochastic gradient
  algorithms}.
\newblock In \emph{International Conference on Machine Learning}.

\bibitem[{Li, Tai, and E(2019)}]{li2019foundations}
Li, Q.; Tai, C.; and E, W. 2019.
\newblock {Stochastic Modified Equations and Dynamics of Stochastic Gradient
  Algorithms I: Mathematical Foundations}.
\newblock \emph{Journal of Machine Learning Research} 20(40): 1--47.
\newblock \urlprefix\url{http://jmlr.org/papers/v20/17-526.html}.

\bibitem[{Madry et~al.(2017)Madry, Makelov, Schmidt, Tsipras, and
  Vladu}]{madry2017towards}
Madry, A.; Makelov, A.; Schmidt, L.; Tsipras, D.; and Vladu, A. 2017.
\newblock {Towards deep learning models resistant to adversarial attacks}.
\newblock \emph{arXiv preprint arXiv:1706.06083} .

\bibitem[{Meng and Chen(2017)}]{Meng2017MagNetAT}
Meng, D.; and Chen, H. 2017.
\newblock {MagNet: A Two-Pronged Defense against Adversarial Examples}.
\newblock In \emph{ACM Conference on Computer and Communications Security}.

\bibitem[{Pan and Jiang(2015)}]{Hengyue2015}
Pan, H.; and Jiang, H. 2015.
\newblock Annealed Gradient Descent for Deep Learning.
\newblock In \emph{Proceedings of the Thirty-First Conference on Uncertainty in
  Artificial Intelligence}, UAI’15, 652–661. Arlington, Virginia, USA: AUAI
  Press.
\newblock ISBN 9780996643108.

\bibitem[{Russakovsky et~al.(2015)Russakovsky, Deng, Su, Krause, Satheesh, Ma,
  Huang, Karpathy, Khosla, Bernstein, Berg, and Fei-Fei}]{ILSVRC15}
Russakovsky, O.; Deng, J.; Su, H.; Krause, J.; Satheesh, S.; Ma, S.; Huang, Z.;
  Karpathy, A.; Khosla, A.; Bernstein, M.; Berg, A.~C.; and Fei-Fei, L. 2015.
\newblock {ImageNet Large Scale Visual Recognition Challenge}.
\newblock \emph{International Journal of Computer Vision (IJCV)} 115(3):
  211--252.
\newblock \doi{10.1007/s11263-015-0816-y}.

\bibitem[{Shafahi et~al.(2019)Shafahi, Najibi, Ghiasi, Xu, Dickerson, Studer,
  Davis, Taylor, and Goldstein}]{Ali2019}
Shafahi, A.; Najibi, M.; Ghiasi, A.; Xu, Z.; Dickerson, J.~P.; Studer, C.;
  Davis, L.~S.; Taylor, G.; and Goldstein, T. 2019.
\newblock {Adversarial Training for Free!}
\newblock \emph{CoRR} abs/1904.12843.
\newblock \urlprefix\url{http://arxiv.org/abs/1904.12843}.

\bibitem[{Sinha, Namkoong, and Duchi(2018)}]{sinha2018certifiable}
Sinha, A.; Namkoong, H.; and Duchi, J. 2018.
\newblock {Certifiable Distributional Robustness with Principled Adversarial
  Training}.
\newblock In \emph{International Conference on Learning Representations}.
\newblock \urlprefix\url{https://openreview.net/forum?id=Hk6kPgZA-}.

\bibitem[{Song et~al.(2018)Song, Kim, Nowozin, Ermon, and
  Kushman}]{song2018pixeldefend}
Song, Y.; Kim, T.; Nowozin, S.; Ermon, S.; and Kushman, N. 2018.
\newblock {PixelDefend: Leveraging Generative Models to Understand and Defend
  against Adversarial Examples}.
\newblock In \emph{International Conference on Learning Representations}.
\newblock \urlprefix\url{https://openreview.net/forum?id=rJUYGxbCW}.

\bibitem[{Szegedy et~al.(2013)Szegedy, Zaremba, Sutskever, Bruna, Erhan,
  Goodfellow, and Fergus}]{szegedy2013intriguing}
Szegedy, C.; Zaremba, W.; Sutskever, I.; Bruna, J.; Erhan, D.; Goodfellow, I.;
  and Fergus, R. 2013.
\newblock {Intriguing properties of neural networks}.
\newblock \emph{arXiv preprint arXiv:1312.6199} .

\bibitem[{Tram{\`{e}}r et~al.(2017)Tram{\`{e}}r, Kurakin, Papernot, Boneh, and
  McDaniel}]{tramer2017ensemble}
Tram{\`{e}}r, F.; Kurakin, A.; Papernot, N.; Boneh, D.; and McDaniel, P. 2017.
\newblock {Ensemble adversarial training: Attacks and defenses}.
\newblock \emph{arXiv preprint arXiv:1705.07204} .

\bibitem[{Wang et~al.(2019)Wang, Ma, Bailey, Yi, Zhou, and Gu}]{wang19i}
Wang, Y.; Ma, X.; Bailey, J.; Yi, J.; Zhou, B.; and Gu, Q. 2019.
\newblock {On the Convergence and Robustness of Adversarial Training}.
\newblock In Chaudhuri, K.; and Salakhutdinov, R., eds., \emph{Proceedings of
  the 36th International Conference on Machine Learning}, volume~97 of
  \emph{Proceedings of Machine Learning Research}, 6586--6595. Long Beach,
  California, USA: PMLR.

\bibitem[{Wong, Rice, and Kolter(2020)}]{Wong2020Fast}
Wong, E.; Rice, L.; and Kolter, J.~Z. 2020.
\newblock Fast is better than free: Revisiting adversarial training.
\newblock In \emph{International Conference on Learning Representations}.
\newblock \urlprefix\url{https://openreview.net/forum?id=BJx040EFvH}.

\bibitem[{Xie et~al.(2018)Xie, Wu, van~der Maaten, Yuille, and He}]{Cihang2018}
Xie, C.; Wu, Y.; van~der Maaten, L.; Yuille, A.~L.; and He, K. 2018.
\newblock Feature Denoising for Improving Adversarial Robustness.
\newblock \emph{CoRR} abs/1812.03411.
\newblock \urlprefix\url{http://arxiv.org/abs/1812.03411}.

\bibitem[{Zhang et~al.(2019)Zhang, Zhang, Lu, Zhu, and Dong}]{zhang2019you}
Zhang, D.; Zhang, T.; Lu, Y.; Zhu, Z.; and Dong, B. 2019.
\newblock {You Only Propagate Once: Accelerating Adversarial Training via
  Maximal Principle}.
\newblock \emph{arXiv preprint arXiv:1905.00877} .

\bibitem[{Zhang et~al.(2020)Zhang, Xu, Han, Niu, Cui, Sugiyama, and
  Kankanhalli}]{zhang2020attacks}
Zhang, J.; Xu, X.; Han, B.; Niu, G.; Cui, L.; Sugiyama, M.; and Kankanhalli, M.
  2020.
\newblock Attacks Which Do Not Kill Training Make Adversarial Learning
  Stronger.

\bibitem[{Zhang and Zhu(2019)}]{ZhangZ19}
Zhang, T.; and Zhu, Z. 2019.
\newblock Interpreting Adversarially Trained Convolutional Neural Networks.
\newblock In \emph{ICML}, 7502--7511.
\newblock \urlprefix\url{http://proceedings.mlr.press/v97/zhang19s.html}.

\bibitem[{Zheng et~al.(2019)Zheng, Zhang, Gu, Lee, and
  Prakash}]{zheng2019efficient}
Zheng, H.; Zhang, Z.; Gu, J.; Lee, H.; and Prakash, A. 2019.
\newblock Efficient Adversarial Training with Transferable Adversarial
  Examples.

\end{thebibliography}

\onecolumn
\appendix

\section{Appendix:Additional experiment details}
We use similar experiment settings as in \citep{zhang2019you} for MNIST and CIFAR10 experiment. 
\subsection{MNIST classification}
In this experiment, for PGD adversarial training, we set the adversarial constraint $\epsilon$ as 0.3, the step size as 0.01. For CW attack results in this experiment, we set the parameter eps to be 100 and run for 100 iterations in this reference implementation \url{https://github.com/xuanqing94/BayesianDefense/blob/master/attacker/cw.py}. For displaying the error-time curve, we use the smoothing function used in Tensorboard for all methods. The smoothing parameter is set as 0.09. For outer minimization, we use the stochastic gradient descent method and set the learning rate as 0.1, the momentum as 0.9, and the weight decay as 5e-4.

\subsection{Cifar10 classification}
In this experiment, for PGD adversarial training, we set the adversarial constraint $\epsilon$ as 8/255, the step size as 2/255. For CW attack results in this experiment, we set the parameter eps to be 0.5 and run for 100 iterations in this reference implementation \url{https://github.com/xuanqing94/BayesianDefense/blob/master/attacker/cw.py}. For displaying the error-time curve, we use the smoothing function used in Tensorboard for all methods. The smoothing parameter is set as 0.6. For outer minimization, we use the stochastic gradient descent method and set the learning rate as 5e-2, the momentum as 0.9, and the weight decay as 5e-4. We also use a piece-wise constant learning rate scheduler in PyTorch by setting the milestones at the 75-th and 90-th epoch with a factor of 0.1.

\section{Appendix:Additional experiment results}
\subsection{Evaluation against different settings of PGD on MNIST classification}
PGD-40 is chosen for evaluation, as it is a strong adversarial attack used in other papers \citep{zhang2019you}. We also try other settings, such as PGD-20 and PGD-60. Other settings are the same as our original settings in MNIST classification experiment. The results are shown in Figure~\ref{fig:exps_evalpgd}.
We can see that, compared with the case of PGD-40 for evaluation, both Amata and PGD-40 are able to achieve above the 96\% adversarial accuracy but at different speeds. In the case of PGD-20, Amata takes around 28\% training time of the PGD-40 to achieve the 96\% adversarial accuracy. In the case of PGD-60,  Amata takes around 60\% training time of the PGD-40 to achieve the 96\% adversarial accuracy.

\begin{figure}[!ht]

\begin{center}
\begin{tabular}{ll}
\includegraphics[width=0.5\columnwidth]{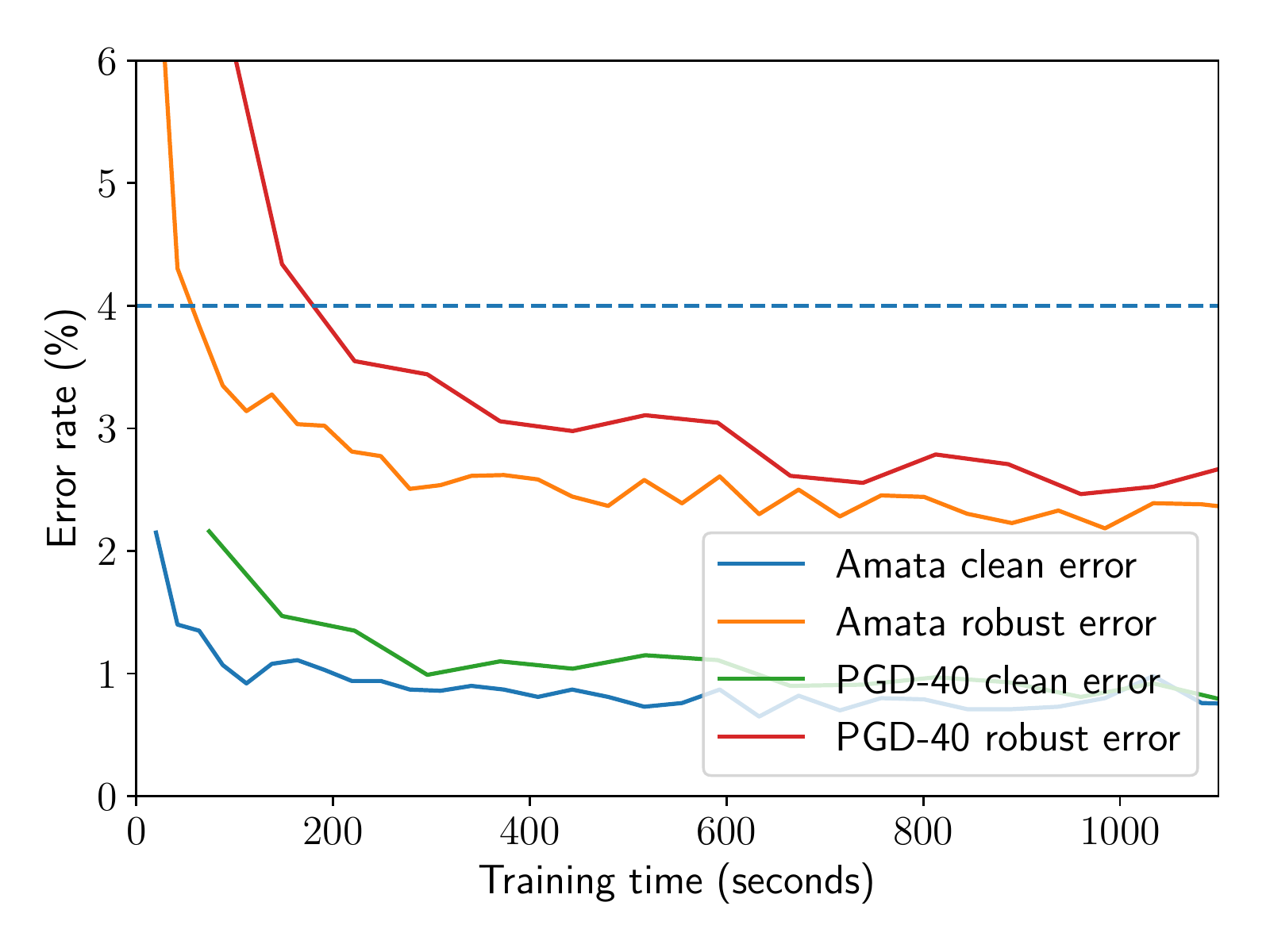} &
\includegraphics[width=0.5\columnwidth]{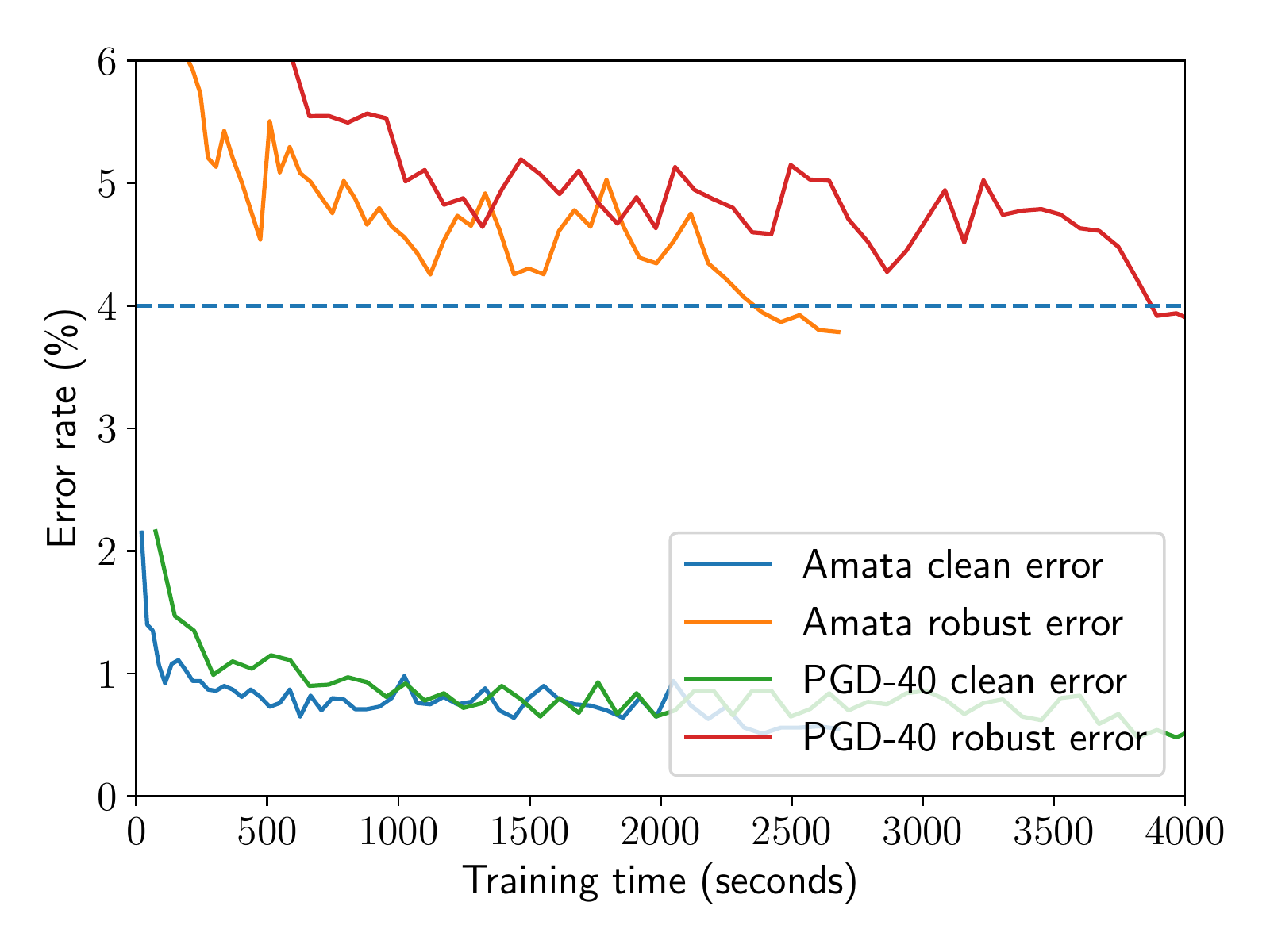}
\end{tabular}
\end{center}
\caption{\textbf{Left:} MNIST result. Training epoch against PGD-20 attack.\textbf{Right:} MNIST result. Training epoch against PGD-60 attack.}
\label{fig:exps_evalpgd}
\end{figure}

\subsection{Other decay scheme for controlling adversarial training}
Finding better decay scheme is an interesting future direction. We tried another exponential scheme:
\begin{equation}
K_{t} = K_{min} +  (K_{max}-K_{min})\cdot (1-e^{-\eta \cdot t})/(1-e^{-\eta \cdot T})
\end{equation}
where $K_{t}$ is the number of PGD adversarial training steps at $t$-th epoch, $K_{min}$ is the minimum number of PGD steps at the beginning, $K_{max}$ is the maximum number of PGD steps in the last epoch, $\eta$ is the hyper-parameter controlling the shape of the scheme. This design ensures that when $K_{0}= K_{min}$ and $K_{T} = K_{max}$, the exponential decay scheme is comparable with the linear scheme used in our paper. However, we tried different parameters of eta ranging from  0.1 to 5, but did not observe performance improvements. The results are shown in Figure~\ref{fig:MNIST_decayscheme}. The "Amata (Exp eta)" in the legend denotes the exponential decay scheme and the "Amata" in the legend denotes the original linear decay scheme. From Figure~\ref{fig:MNIST_decayscheme}, we can see that the new exponential decay scheme cannot outperform the linear decay scheme.

\begin{figure}[!h]
\begin{center}
\includegraphics[width=0.5\columnwidth]{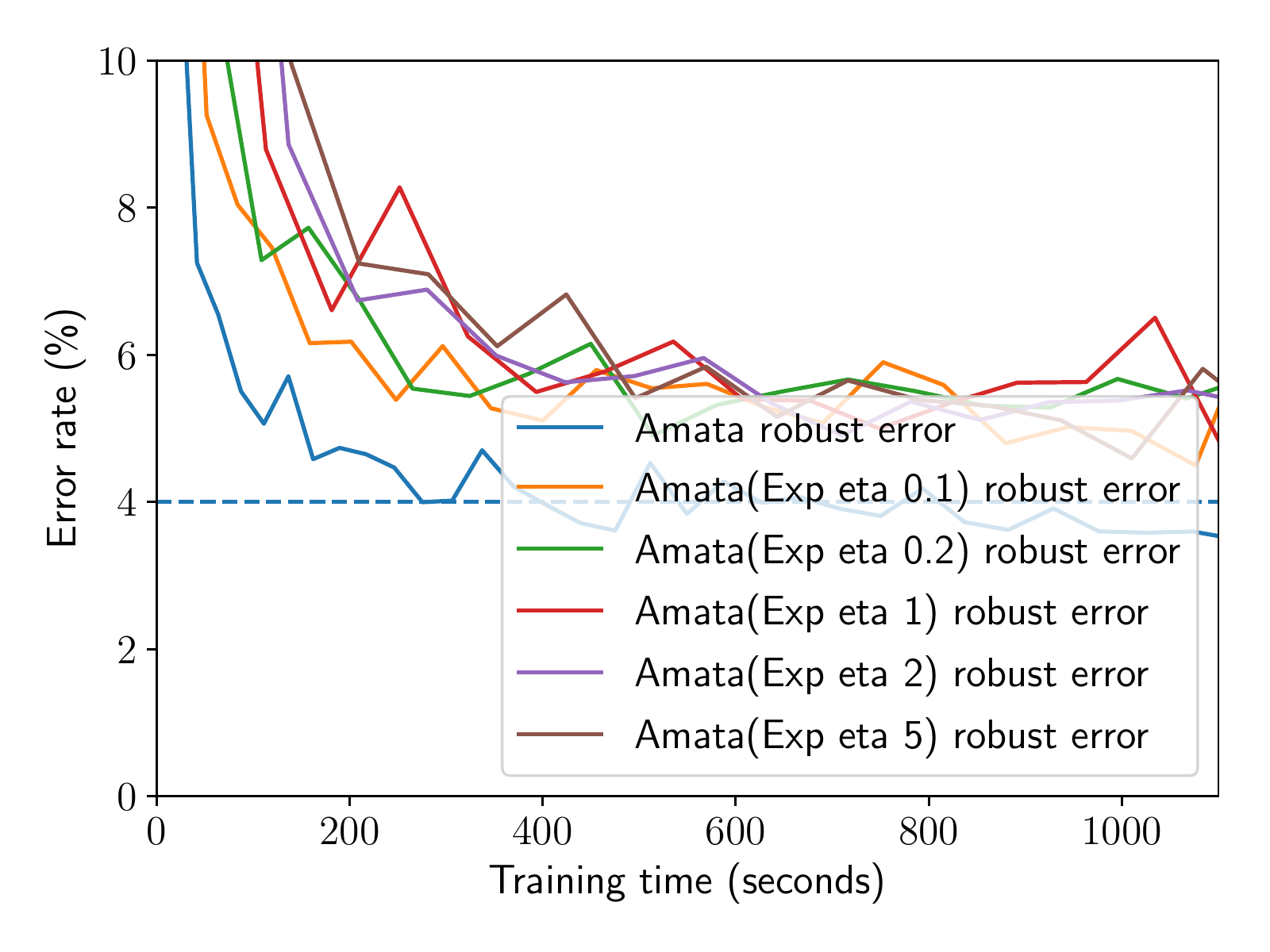}
\end{center}
\caption{MNIST result. Training epoch against PGD-40 attack. We use Amata with the setting $K_{min}=2$ and $K_{max}=5$.}
\label{fig:MNIST_decayscheme}
\end{figure}

\section{Visualizing and understanding adversarial training}
\label{sec:vis_amata}
\begin{figure}[!ht]
\begin{center}
\begin{tabular}{cc}
\includegraphics[width=0.4\columnwidth]{countour_epoch1.pdf} &
\includegraphics[width=0.4\columnwidth]{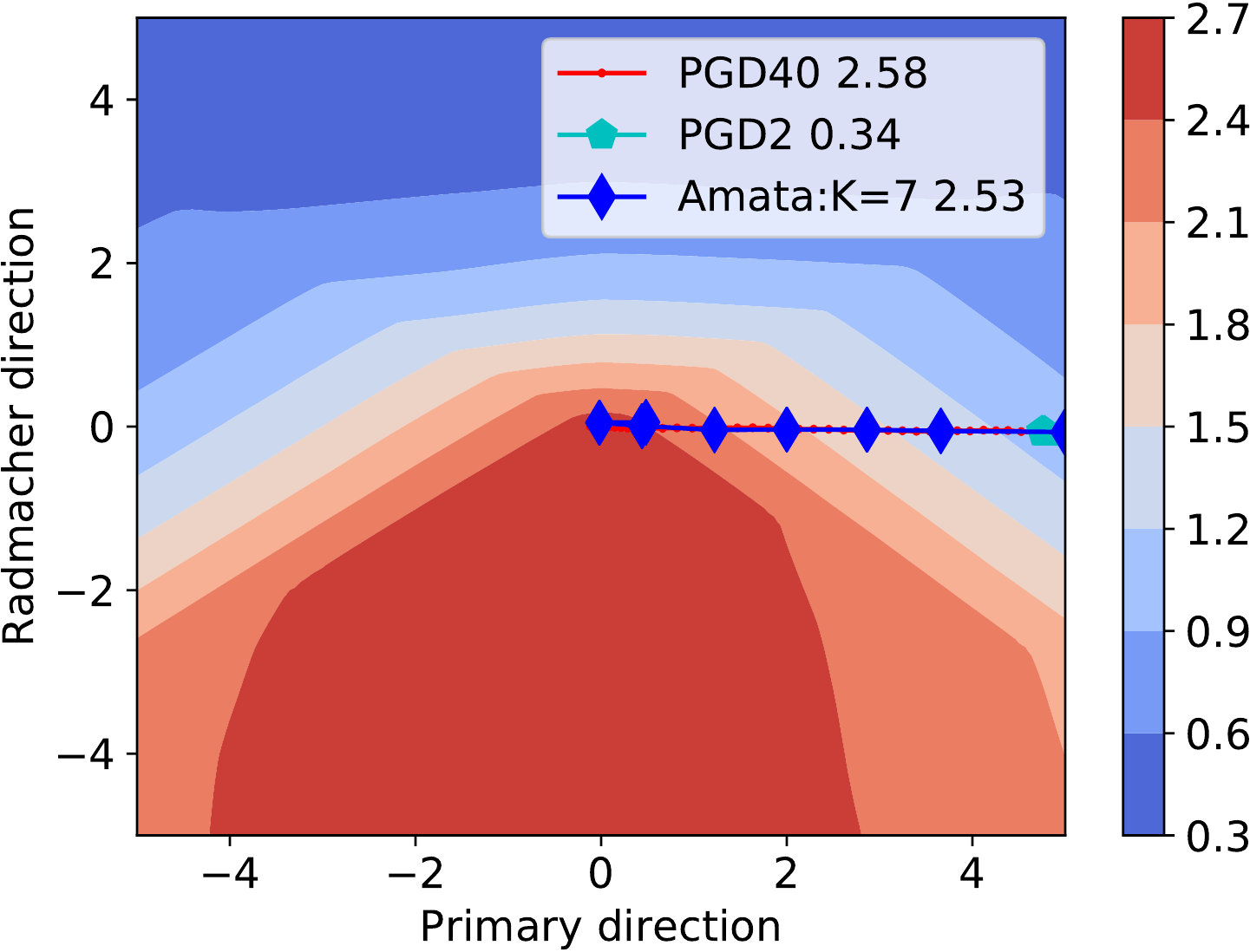} \\
(a) Epoch 1 &(b) Epoch 4  \\
\includegraphics[width=0.4\columnwidth]{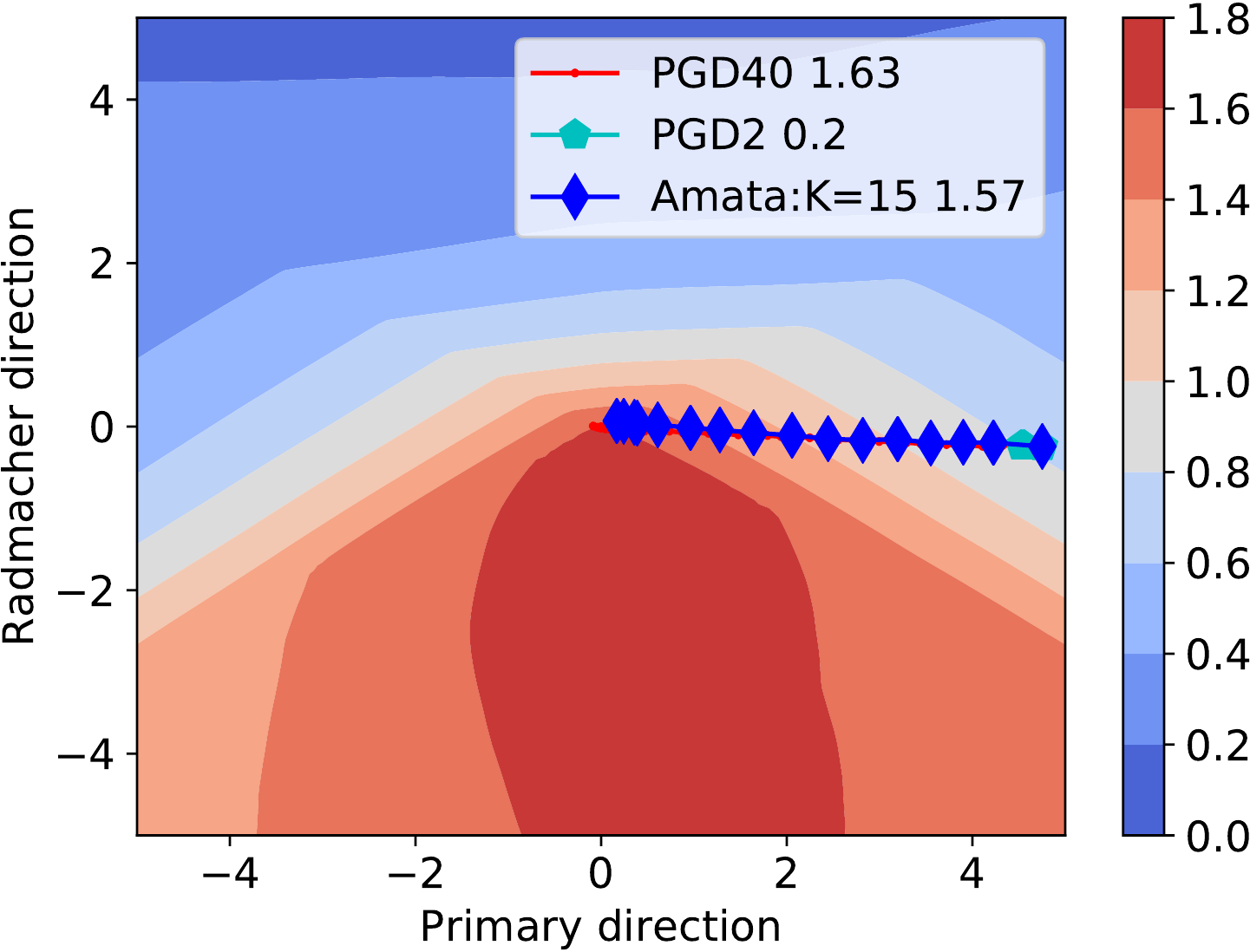} &
\includegraphics[width=0.4\columnwidth]{countour_epoch10.pdf} \\
(a) Epoch 8 &(b) Epoch 10
\end{tabular}
\end{center}
\caption{Visualization of inner maximization trajectories.}
\label{fig:vis_trajectory}
\end{figure}

We visualize and analyze the adversarial training process particularly on the inner maximization problem to give intuitions why Amata works. In the inner maximization steps, a sequence of adversarial examples are generated. They live in a high dimensional space that cannot be visualized directly. For visualization, we project the obtained sequence of adversarial examples on the X axis and Y axis while Z axis denotes the adversarial loss. For the X axis, we first obtain a set of vectors where each vector points from the normal example $\xB^{0}$ to the generated adversarial example in PGD. Then, similar to \citep{LiHaoNIPS2018}, we apply principal component analysis on the set of vectors and use the most principal component vector as the X axis. For the Y axis, similar to \citep{Ali2019}, we choose a random vector generated by the Rademacher distribution. We choose the MNIST adversarial training problem for illustration to reduce the computation burden. To generate the loss landscape of the adversarial training problem, we compute the adversarial loss value at each position of a $100\times 100$ grid. To plot the optimization trajectories of PGD and Amata in the low-dimensional space, we project the trajectories on the X axis and Y axis. For comparisons, we choose PGD-40, PGD-2 and Amata ($K_{\text{min}}=2$, $K_{\text{max}}=40$). The step size $\alpha$ is 0.01 for all methods. The results at different epochs are shown in Figure~\ref{fig:vis_trajectory}. We also indicate the adversarial loss obtained by different methods in the legend.The $K$ parameter in the legend for Amata is the number of steps for Amata. From Figure~\ref{fig:vis_trajectory}, we can observe that at the beginning of adversarial training (Epoch 1), the loss landscape is very smooth (difference of the maximum and the minimum is minor), making it easy to use large step size with fewer number of iterations. All methods can achieve similar adversarial loss. As the adversarial training proceeds, the loss landscape is becoming sharper (Epoch 4) and Amata can adaptively balance the number of steps and the step size to achieve a similar adversarial loss as obtained by PGD-40 at the cost of more iterations. During the process, we also observe that the maximum value of the adversarial loss landscape decreases gradually, indicating that better robustness is achieved. Note that in Epoch 8 and Epoch 10, PGD-2 can only achieve a adversarial perturbation resulting in low adversarial loss. From this visualization, we find Amata can adaptively achieve a trade-off between time cost and the degree of adversarial perturbation effectively.

\section{Appendix:Proof of convergence}
\label{sec:proof_converge}
We provide a convergence analysis of our proposed Amata algorithm for solving the min-max problem. The proof of convergence in this paper largely follows \citep{wang19i, sinha2018certifiable}. First, we will introduce some notations for clarity. We denote $\xB^{*}_{i}(\thetaB)=\argmax_{\xB_{i} \in \Xcal^{i}} \ell(\thetaB, \xB_{i})$ where $\ell(\thetaB, \xB_{i})$ is a short hand notation for the classification loss function  $\ell(h_{\thetaB}(\xB_{i}), y_{i})$, $\Xcal^{i}=\{\xB|\norm{\xB-\xB^{0}_{i}} \leq \epsilon \}$, and $\bar{f}_{i}(\thetaB)=\max_{\xB_{i} \in \Xcal^{i}} f(\thetaB, \xB_{i})$, then $\tilde{\xB_{i}}(\thetaB)$ is a $\delta$-approximate solution by our algorithm to $\xB^{*}_{i}(\thetaB)$, if it satisfies that:

\begin{equation}
    \label{eq:max_precision}
    \norm{\tilde{\xB_{i}}(\thetaB) - \xB^{*}_{i}(\thetaB)} \leq \delta
\end{equation}

In addition, denote the objective function in Equation~\ref{minmax_obj} by $L(\thetaB)$ , and its gradient by $\down L(\thetaB)=\frac{1}{n}\sum_{i=1}^{n}\down \ell_{i}(\thetaB)$. Let $g(\thetaB)=\frac{1}{|\Bcal|} \sum_{i \in \Bcal } \down_{\thetaB} \bar{f}(\thetaB)$ be the stochastic gradient of $L(\thetaB)$, where $\Bcal$ is the mini-batch. Then, we have $\Ebb[g(\thetaB)]=\down L(\thetaB)$. Let $\down_{\thetaB}\ell(\thetaB, \tilde{\xB}(\thetaB))$ be the gradient of $\ell(\thetaB, \tilde{\xB}(\thetaB))$ with respect to $\thetaB$, and $\tilde{g}(\thetaB) = \frac{1}{|\Bcal|} \sum_{i \in \Bcal} \down_{\thetaB} \ell(\thetaB, \tilde{\xB}_{i}(\thetaB)$ be the approximate stochastic gradient of $L(\thetaB)$. Before we prove the convergence of the algorithm, we have following assumptions.
\begin{assumption}
\label{assum:continuous}
The function $\ell(\thetaB, \xB)$ satisfies the gradient Lipschitz conditions:
\begin{align}
    \text{sup}_{\xB} \norm{\down_{\thetaB} \ell(\thetaB, \xB) -\down_{\thetaB} \ell(\thetaB^{*}, \xB) }_{2}  &\leq L_{\thetaB\thetaB} \norm{\thetaB - \thetaB^{*}}_{2} \nonumber\\
    \text{sup}_{\thetaB} \norm{\down_{\thetaB} \ell(\thetaB, \xB) -\down_{\thetaB} \ell(\thetaB, \xB^{*}) }_{2}  &\leq L_{\thetaB\xB} \norm{\xB - \xB^{*}}_{2} \nonumber\\
    \text{sup}_{\xB} \norm{\down_{\xB} \ell(\thetaB, \xB) -\down_{\xB} \ell(\thetaB^{*}, \xB) }_{2}  &\leq L_{\xB\thetaB} \norm{\thetaB - \thetaB^{*}}_{2} \nonumber
\end{align}
\end{assumption}
where $L_{\thetaB\thetaB}$, $L_{\thetaB\xB}$, and $L_{\xB\thetaB}$ are positive constants. Assumption~\ref{assum:continuous} was used in \citep{sinha2018certifiable,wang19i}.

\begin{assumption}
\label{assum:strongconvex}
The function $\ell(\thetaB, \xB)$ is \textit{locally $\mu$-strongly concave} in $\Xcal=\{\xB\,:\, \norm{\xB-\xB^{0}_{i}}_{\infty} \leq \epsilon \}$ for all $i \in [n]$, \ie, for any $\xB_{1}\, , \, \xB_{2} \in \Xcal_{i}$:
\begin{equation}
    \ell(\thetaB, \xB_{1}) \leq \ell(\thetaB, \xB_{2}) + \langle \down_{\xB}\ell(\thetaB, \xB_{2}), \xB_{1}-\xB_{2} \rangle - \frac{\mu}{2} \norm{\xB_{1}-\xB_{2}}^2_{2} \nonumber
\end{equation}
\end{assumption}
where $\mu$ is a positive constant which measures the curvature of the loss function. This assumption was used for analyzing distributional robust optimization problems \citep{sinha2018certifiable}.

\begin{assumption}
\label{assum:stochastic_variance}
The variance of the stochastic gradient $g(\thetaB)$ is bounded by a constant $\sigma^2>0$:
\begin{equation}
   \Ebb[\norm{g(\thetaB)-\down L(\thetaB)}^{2}_{2}] \leq \sigma^2 \nonumber
\end{equation}
where $\down L(\thetaB)$ is the full gradient.
\end{assumption}
The Assumption~\ref{assum:stochastic_variance} is commonly used for analyzing stochastic gradient optimization algorithms.

\begin{theorem}
\label{th:converge}
Suppose Assumptions~\ref{assum:continuous},\ref{assum:strongconvex}, and \ref{assum:stochastic_variance} holds. Denote $\Delta=L(\thetaB^{0})-\min_{\thetaB}L(\thetaB)$. If the step size of outer minimization is $\eta_{t}=\min(1/\beta, \sqrt{\frac{\Delta}{T\beta\sigma^2}})$. Then, we have:
\begin{equation}
    \frac{1}{T} \sum_{t=0}^{T-1} \Ebb[\norm{\down L(\thetaB^{t})}^{2}_{2}] \leq 4\sigma \sqrt{\frac{\beta\Delta}{T}} + 5L_{\thetaB\xB}^{2}\delta^2 \nonumber
\end{equation}
where $\beta=L_{\thetaB\xB}L{\xB\thetaB}/\mu + L_{\thetaB\thetaB}$.
\end{theorem}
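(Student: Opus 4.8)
The plan is to treat this as a standard nonconvex SGD convergence argument for the robust objective $L(\thetaB) = \frac{1}{n}\sum_i \bar{f}_i(\thetaB)$ with $\bar{f}_i(\thetaB)=\max_{\xB_i\in\Xcal^i}\ell(\thetaB,\xB_i)$, modified by two features: the exact gradient is a Danskin-type derivative evaluated at the inner maximizer $\xB^*_i(\thetaB)$, and the algorithm only accesses a $\delta$-approximate maximizer $\tilde{\xB}_i(\thetaB)$, which injects a controlled bias into each stochastic gradient step. The argument decomposes into a smoothness lemma, a bias bound, and a descent--telescoping step.

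The crux is the first step: showing $L$ is $\beta$-smooth with $\beta = L_{\thetaB\xB}L_{\xB\thetaB}/\mu + L_{\thetaB\thetaB}$. By Assumption~\ref{assum:strongconvex} (local $\mu$-strong concavity) the inner maximizer $\xB^*_i(\thetaB)$ is unique, so Danskin's theorem gives $\down L(\thetaB)=\frac1n\sum_i \down_{\thetaB}\ell(\thetaB,\xB^*_i(\thetaB))$. I would first prove that $\thetaB\mapsto \xB^*_i(\thetaB)$ is $(L_{\xB\thetaB}/\mu)$-Lipschitz: this follows by combining the first-order optimality of $\xB^*_i$ under $\mu$-strong concavity with the mixed Lipschitz bound on $\down_{\xB}\ell$ in $\thetaB$ (third line of Assumption~\ref{assum:continuous}). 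This quantitative lemma on the argmax map is the main obstacle, since the smoothness constant $\beta$ rests on it. Then, writing $\down L(\thetaB)-\down L(\thetaB')$ and adding and subtracting $\down_{\thetaB}\ell(\thetaB',\xB^*(\thetaB))$, the first two lines of Assumption~\ref{assum:continuous} give $\norm{\down L(\thetaB)-\down L(\thetaB')} \le (L_{\thetaB\thetaB}+L_{\thetaB\xB}L_{\xB\thetaB}/\mu)\norm{\thetaB-\thetaB'}=\beta\norm{\thetaB-\thetaB'}$.

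The second step quantifies the bias from inexact inner maximization. Since $\norm{\tilde{\xB}_i(\thetaB)-\xB^*_i(\thetaB)}\le\delta$, the second line of Assumption~\ref{assum:continuous} yields $\norm{\down_{\thetaB}\ell(\thetaB,\tilde{\xB}_i)-\down_{\thetaB}\ell(\thetaB,\xB^*_i)}\le L_{\thetaB\xB}\delta$ for each $i$, so the approximate stochastic gradient satisfies $\norm{\tilde{g}(\thetaB)-g(\thetaB)}\le L_{\thetaB\xB}\delta$, where $g$ is the exact stochastic gradient with $\Ebb[g(\thetaB)]=\down L(\thetaB)$ and variance at most $\sigma^2$ (Assumption~\ref{assum:stochastic_variance}). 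Hence the effective step direction decomposes as $\tilde{g}_t = \down L(\thetaB_t) + (\text{zero-mean noise}) + (\text{bias of norm} \le L_{\thetaB\xB}\delta)$.

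The final step is the descent--telescoping argument. I would apply the descent lemma for $\beta$-smooth functions to $\thetaB_{t+1}=\thetaB_t-\eta_t\tilde{g}_t$, take conditional expectation over the minibatch, and insert the decomposition above: the zero-mean noise contributes a term of order $\eta_t^2\beta\sigma^2$, while the cross term with the bias and the second-moment term are controlled by Young's inequality, producing an additive $O(L_{\thetaB\xB}^2\delta^2)$ contribution and leaving a strictly negative multiple of $\Ebb\norm{\down L(\thetaB_t)}^2$ on the right (this is where $\eta_t\le 1/\beta$ enters, guaranteeing the gradient coefficient is negative). Summing over $t=0,\dots,T-1$, telescoping via $L(\thetaB_0)-\Ebb[L(\thetaB_T)]\le\Delta$, dividing by $\eta T$, and substituting $\eta_t=\min(1/\beta,\sqrt{\Delta/(T\beta\sigma^2)})$ to balance the $\Delta/(\eta T)$ and $\eta\beta\sigma^2$ terms delivers the stated bound $\frac1T\sum_t\Ebb\norm{\down L(\thetaB_t)}^2 \le 4\sigma\sqrt{\beta\Delta/T}+5L_{\thetaB\xB}^2\delta^2$; carefully tracking the constants through the Young-inequality splits accounts for the factors $4$ and $5$.
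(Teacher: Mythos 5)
Your proposal takes essentially the same route as the paper's own proof: the identical three-part decomposition into (i) $\beta$-smoothness of $L$ with $\beta = L_{\thetaB\xB}L_{\xB\thetaB}/\mu + L_{\thetaB\thetaB}$, (ii) the bias bound $\norm{\tilde{g}(\thetaB)-g(\thetaB)}_{2}\leq L_{\thetaB\xB}\delta$ via the $\delta$-approximate inner maximizer (the paper's Lemma on gradient bias), and (iii) the descent--telescoping argument with the same step size $\eta_t=\min(1/\beta,\sqrt{\Delta/(T\beta\sigma^2)})$ balancing the $\Delta/(\eta T)$ and $\beta\eta\sigma^2$ terms. The only difference is cosmetic: you sketch the proof of the smoothness lemma yourself (Danskin plus the $(L_{\xB\thetaB}/\mu)$-Lipschitz argmax map under $\mu$-strong concavity, which is indeed the standard argument), whereas the paper cites it from Wang et al.; your proposal is correct.
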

The proof of this theorem can be found in the Appendix. Theorem~\ref{th:converge} indicates that if the Amata finds solutions of the inner maximization problem closer enough to the maxima, Amata can converge at a sublinear rate.

\begin{lemma}
\label{lemma:Ls-smooth}
Under Assumption~\ref{assum:continuous} and Assumption~\ref{assum:strongconvex}, we have $L(\thetaB)$ is $\beta$-smooth where $\beta=L_{\thetaB\xB}L_{\xB\thetaB}/\mu + L_{\thetaB\thetaB}$, \ie, for any $\thetaB_{1}$ and $\thetaB_{2}$ it holds:
\begin{align}
    L(\thetaB_{1}) \leq L(\thetaB_{2}) + \langle \down L(\thetaB_{2}), \thetaB_{1}-\thetaB_{2} \rangle + \frac{\beta}{2} \norm{\thetaB_{1}-\thetaB_{2}}^{2}_{2} \nonumber
\end{align}
\end{lemma}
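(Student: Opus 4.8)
The plan is to prove that each $\bar{f}_i(\thetaB) = \max_{\xB_i \in \Xcal^i}\ell(\thetaB,\xB_i)$ has a $\beta$-Lipschitz gradient, and then transfer this to $L(\thetaB)=\frac1n\sum_i \bar{f}_i(\thetaB)$ by averaging; the claimed quadratic upper bound is exactly the standard descent inequality for a $\beta$-smooth function. So I would fix one index, drop it, and first invoke Danskin's theorem: Assumption~\ref{assum:strongconvex} makes $\ell(\thetaB,\cdot)$ $\mu$-strongly concave on the compact box $\Xcal$, so the maximizer $\xB^{*}(\thetaB)$ is unique and $\bar{f}$ is differentiable with $\down \bar{f}(\thetaB) = \down_{\thetaB}\ell(\thetaB,\xB^{*}(\thetaB))$. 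It then suffices to bound $\norm{\down_{\thetaB}\ell(\thetaB_1,\xB^{*}(\thetaB_1)) - \down_{\thetaB}\ell(\thetaB_2,\xB^{*}(\thetaB_2))}$ by $\beta\norm{\thetaB_1-\thetaB_2}$.

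The key step, which I expect to be the main obstacle, is showing that the maximizer map $\thetaB\mapsto\xB^{*}(\thetaB)$ is Lipschitz with constant $L_{\xB\thetaB}/\mu$. Write $\xB_1^{*}=\xB^{*}(\thetaB_1)$ and $\xB_2^{*}=\xB^{*}(\thetaB_2)$. Because the maximization is over the box $\Xcal$ rather than unconstrained, I would use first-order optimality in variational-inequality form, $\langle \down_{\xB}\ell(\thetaB_j,\xB_j^{*}),\, \xB-\xB_j^{*}\rangle \le 0$ for all $\xB\in\Xcal$, testing the $j{=}1$ inequality at $\xB=\xB_2^{*}$ and the $j{=}2$ inequality at $\xB=\xB_1^{*}$, and adding them to obtain $\langle \down_{\xB}\ell(\thetaB_1,\xB_1^{*}) - \down_{\xB}\ell(\thetaB_2,\xB_2^{*}),\, \xB_2^{*}-\xB_1^{*}\rangle \le 0$. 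Splitting the left-hand side at the intermediate gradient $\down_{\xB}\ell(\thetaB_1,\xB_2^{*})$, strong concavity gives strong monotonicity of $\down_{\xB}\ell(\thetaB_1,\cdot)$, contributing a term $\ge \mu\norm{\xB_1^{*}-\xB_2^{*}}^2$, while the remaining cross term is controlled by the third Lipschitz bound of Assumption~\ref{assum:continuous}, namely $\norm{\down_{\xB}\ell(\thetaB_1,\xB_2^{*}) - \down_{\xB}\ell(\thetaB_2,\xB_2^{*})}\le L_{\xB\thetaB}\norm{\thetaB_1-\thetaB_2}$. Rearranging and dividing by $\norm{\xB_1^{*}-\xB_2^{*}}$ yields $\norm{\xB_1^{*}-\xB_2^{*}}\le (L_{\xB\thetaB}/\mu)\norm{\thetaB_1-\thetaB_2}$.

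Finally I would combine the pieces through a triangle inequality, inserting the intermediate gradient $\down_{\thetaB}\ell(\thetaB_2,\xB_1^{*})$:
\begin{align}
\norm{\down \bar{f}(\thetaB_1) - \down \bar{f}(\thetaB_2)}
&\le \norm{\down_{\thetaB}\ell(\thetaB_1,\xB_1^{*}) - \down_{\thetaB}\ell(\thetaB_2,\xB_1^{*})} \nonumber \\
&\quad + \norm{\down_{\thetaB}\ell(\thetaB_2,\xB_1^{*}) - \down_{\thetaB}\ell(\thetaB_2,\xB_2^{*})}. \nonumber
\end{align}
The first term is at most $L_{\thetaB\thetaB}\norm{\thetaB_1-\thetaB_2}$ by the first bound of Assumption~\ref{assum:continuous}, and the second is at most $L_{\thetaB\xB}\norm{\xB_1^{*}-\xB_2^{*}}\le (L_{\thetaB\xB}L_{\xB\thetaB}/\mu)\norm{\thetaB_1-\thetaB_2}$ by the second bound together with the previous step. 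Summing gives the Lipschitz constant $\beta = L_{\thetaB\xB}L_{\xB\thetaB}/\mu + L_{\thetaB\thetaB}$; averaging over the $n$ samples preserves it for $\down L$, and the standard fact that a $\beta$-Lipschitz gradient implies the quadratic upper bound (integrate $\down L$ along the segment from $\thetaB_2$ to $\thetaB_1$ and bound the remainder) produces exactly the stated inequality. The only subtleties to handle carefully are the regularity needed to apply Danskin's theorem and the correct treatment of the box-constrained optimality conditions, both of which are supplied by strong concavity and the gradient-Lipschitz assumption.
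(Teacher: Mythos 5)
Your proposal is correct, and it matches the intended argument: the paper does not prove this lemma in-line but defers to \citep{wang19i}, whose proof (tracing back to \citep{sinha2018certifiable}) is exactly your route --- Danskin's theorem via strong concavity to get $\down \bar{f}(\thetaB) = \down_{\thetaB}\ell(\thetaB,\xB^{*}(\thetaB))$, Lipschitzness of the maximizer map with constant $L_{\xB\thetaB}/\mu$ from the first-order optimality conditions plus strong monotonicity, and a triangle inequality yielding $\beta = L_{\thetaB\xB}L_{\xB\thetaB}/\mu + L_{\thetaB\thetaB}$, followed by the standard descent inequality. You have in effect reconstructed the delegated proof, including the two genuine subtleties (uniqueness of the maximizer and the box-constrained variational-inequality form of optimality), so there is nothing to correct.
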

The proof of Lemma~\ref{lemma:Ls-smooth} can be found in \citep{wang19i}.

\begin{lemma}
\label{lemma:gradbound}
Under Assumption~\ref{assum:continuous} and Assumption~\ref{assum:strongconvex}, the norm of difference between the approximate stochastic gradient $g(\thetaB)$ and the stochastic gradient $\tilde{g}(\thetaB)$ is bounded, \ie, it holds that:
\begin{equation}
    \norm{\tilde{g}(\thetaB)-g(\thetaB)}_{2} \leq L_{\thetaB\xB} \delta \nonumber
\end{equation}
\end{lemma}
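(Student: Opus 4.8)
The plan is to reduce the mini-batch bound to a per-sample gradient discrepancy and then average. First I would expand the two gradients using their definitions, $g(\thetaB)=\frac{1}{|\Bcal|}\sum_{i\in\Bcal}\down_{\thetaB}\ell(\thetaB,\xB^{*}_{i}(\thetaB))$ and $\tilde{g}(\thetaB)=\frac{1}{|\Bcal|}\sum_{i\in\Bcal}\down_{\thetaB}\ell(\thetaB,\tilde{\xB}_{i}(\thetaB))$, so that
\begin{equation}
    \tilde{g}(\thetaB)-g(\thetaB) = \frac{1}{|\Bcal|}\sum_{i\in\Bcal}\left[\down_{\thetaB}\ell(\thetaB,\tilde{\xB}_{i}(\thetaB)) - \down_{\thetaB}\ell(\thetaB,\xB^{*}_{i}(\thetaB))\right]. \nonumber
\end{equation}
Taking $\norm{\cdot}_2$ of both sides and applying the triangle inequality (equivalently, convexity of the norm) pushes the norm inside the average, leaving a sum of per-sample terms of the form $\norm{\down_{\thetaB}\ell(\thetaB,\tilde{\xB}_{i}(\thetaB)) - \down_{\thetaB}\ell(\thetaB,\xB^{*}_{i}(\thetaB))}_2$.

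Next I would invoke the second Lipschitz bound in Assumption~\ref{assum:continuous}, namely $\norm{\down_{\thetaB}\ell(\thetaB,\xB)-\down_{\thetaB}\ell(\thetaB,\xB^{*})}_2 \leq L_{\thetaB\xB}\norm{\xB-\xB^{*}}_2$, applied with the choice $\xB=\tilde{\xB}_i(\thetaB)$ and $\xB^{*}=\xB^{*}_i(\thetaB)$. This controls each summand by $L_{\thetaB\xB}\norm{\tilde{\xB}_i(\thetaB)-\xB^{*}_i(\thetaB)}_2$. Finally I would use the defining property of the $\delta$-approximate solution, $\norm{\tilde{\xB}_i(\thetaB)-\xB^{*}_i(\thetaB)}_2\leq\delta$ from Eq.~\eqref{eq:max_precision}, so that every summand is at most $L_{\thetaB\xB}\delta$; averaging this uniform constant over the mini-batch preserves it and yields $\norm{\tilde{g}(\thetaB)-g(\thetaB)}_2\leq L_{\thetaB\xB}\delta$.

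As for obstacles, there is no genuinely hard step: the estimate is a direct consequence of the stated Lipschitz condition together with the approximation definition. The only points requiring care are bookkeeping ones. One must select the correct Lipschitz constant $L_{\thetaB\xB}$ (controlling the change in the \emph{$\thetaB$-gradient} when the \emph{data} $\xB$ is perturbed), and not confuse it with $L_{\xB\thetaB}$; and one must ensure the $\delta$ bound holds uniformly across the indices $i\in\Bcal$ so that it is unaffected by the averaging. I would also note explicitly that Assumption~\ref{assum:strongconvex} (local $\mu$-strong concavity) is not used in the inequality itself. Its role is upstream: it guarantees that $\xB^{*}_i(\thetaB)$ is the unique inner maximizer and that a $\delta$-approximate solution is meaningfully attainable by the inner PGD loop, which is what makes the quantity $\delta$ well defined in the first place.
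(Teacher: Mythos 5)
Your proof is correct and takes essentially the same route as the paper's: the paper also expands $\tilde{g}(\thetaB)-g(\thetaB)$ as a mini-batch average, pushes the norm inside via the triangle inequality, applies the $L_{\thetaB\xB}$ gradient-Lipschitz condition of Assumption~\ref{assum:continuous}, and then inserts the $\delta$-approximation bound of Eq.~\eqref{eq:max_precision}. Your side remark is also apt: the paper writes $g(\thetaB)$ in terms of $\down_{\thetaB}\bar{f}_{i}(\thetaB)$ and silently identifies it with the gradient evaluated at the maximizer (a Danskin-type step), which is exactly where the strong concavity of Assumption~\ref{assum:strongconvex} enters, not in the displayed inequality itself.
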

\begin{proof}
We have
\begin{align}
    \norm{\tilde{g}(\thetaB) - g(\thetaB)}_{2} &= \norm{\frac{1}{|\Bcal|} \sum_{i \in \Bcal} \left(\down_{\thetaB}f(\thetaB, \tilde{\xB}_{i}(\thetaB)) - \down \bar{f}_{i}(\thetaB) \right)}_{2} \nonumber \\
    &\leq \frac{1}{|\Bcal|} \sum_{i \in \Bcal} \norm{\down_{\thetaB}f(\thetaB, \tilde{\xB}_{i}(\thetaB)) - \down_{\thetaB}f(\thetaB, \xB^{*}_{i}(\thetaB))}_{2} \nonumber \\
    &\leq \frac{1}{|\Bcal|} \sum_{i \in \Bcal} L_{\thetaB\xB} \norm{\tilde{\xB}_{i}(\thetaB) - \xB^{*}_{i}(\thetaB)}_{2}
\end{align}
where the first inequality is from the triangle inequality, and the second inequality is from Assumption~\ref{assum:continuous}. Next, we insert Equation~\ref{eq:max_precision} into the above inequality then:
\begin{equation}
     \norm{\tilde{g}(\thetaB) - g(\thetaB)}_{2} \leq L_{\thetaB\xB} \delta
\end{equation}
which completes the proof.
\end{proof}

Now we can prove the Theorem~\ref{th:converge}:
\begin{proof}

From Lemma~\ref{lemma:Ls-smooth}, we have:
\begin{align}
    &L(\thetaB^{t+1}) \leq L(\thetaB^{t}) + \langle \down L(\thetaB^{t}), \thetaB^{t+1}-\thetaB^{t} \rangle + \frac{\beta}{2} \norm{\thetaB^{t+1}-\thetaB^{t}}^{2}_{2} \nonumber \\
    &=L(\thetaB^{t})+\eta_{t} \langle \down L(\thetaB^{t}), \down L(\thetaB^{t}) - \tilde{g}(\thetaB^{t}) \rangle - \eta_{t} \norm{\down L(\thetaB^{t})}^{2}_{2}  + \frac{\beta\eta_{t}^{2}}{2}\norm{\tilde{g}(\thetaB^{2})}^{2}_{2} \nonumber \\
    &=L(\thetaB^{t}) -\eta_{t}(1-\frac{\beta\eta_{t}}{2}) \norm{\down L(\thetaB^{t})}^{2}_{2} + \eta_{t}(1-\frac{\beta\eta_{t}}{2}) \cdot  \langle \down L(\thetaB^{t}), \down L(\thetaB^{t}) - \tilde{g}(\thetaB^{t}) \rangle + \frac{\beta\eta_{t}^{2}}{2} \norm{\tilde{g}(\thetaB^{t})-\down L(\thetaB^{t})}^{2}_{2} \nonumber \\
    &=L(\thetaB^{t}) -\eta_{t}(1-\frac{\beta\eta_{t}}{2}) \norm{\down L(\thetaB^{t})}^{2}_{2} + \eta_{t}(1-\frac{\beta\eta_{t}}{2}) \cdot \nonumber \\& \langle \down L(\thetaB^{t}), \down L(\thetaB^{t}) -g(\thetaB^{t}) \rangle + \eta_{t}(1-\frac{\beta\eta_{t}}{2}) \langle \down L(\thetaB^{t}), \down L(\thetaB^{t}) - g(\thetaB^{t}) \rangle + \frac{\beta\eta_{t}^{2}}{2} \norm{\tilde{g}(\thetaB^{t})-g(\thetaB^{t})+g(\thetaB^{t})-\down L(\thetaB^{t})}^{2}_{2} \nonumber \\
    &\leq L(\thetaB^{t}) -\frac{\eta_{t}}{2}(1-\frac{\beta\eta_{t}}{2}) \norm{\down L(\thetaB^{t})}^{2}_{2} + \eta_{t}(1-\frac{\beta\eta_{t}}{2}) \cdot \nonumber \\& \norm{\tilde{g}(\thetaB) -g(\thetaB^{t})}^{2}_{2} \nonumber + \beta \eta_{t}^{2}( \norm{\tilde{g}(\thetaB^{t})-g(\thetaB^{t})}^{2}_{2}+\lvert\lvert g(\thetaB^{t})  -\down L(\thetaB^{t})\rvert\rvert ^{2}_{2})  + \eta_{t}(1+\frac{\beta \eta_{t}}{2}) \langle \down L(\thetaB^{t}), \down L(\thetaB^{t}) - g(\thetaB^{t}) \rangle \nonumber
\end{align}
Note that $\Ebb[g(\thetaB^{t})]=\down L(\thetaB)$, taking expectation on both sides of the inequality conditioned on $\thetaB^{t}$. Then we use Assumption~\ref{assum:stochastic_variance} and Lemma~\ref{lemma:gradbound} and simplify the above inequality:
\begin{align}
    &\Ebb[L(\thetaB^{t+1}) - L(\thetaB^{t}) | \thetaB^{t}]  \leq -\frac{\eta_{t}}{2}(1-\frac{\beta\eta_{t}}{2})\norm{\down L(\thetaB^{t})}^{2}_{2} \nonumber \\
    &+\frac{\eta_{t}}{2}(1+\frac{3\beta\eta_{t}}{2}) L_{\thetaB\xB}^{2}\delta^2 + \beta\eta_{t}^{2}\sigma^{2} \nonumber
\end{align}
Taking the telescope sum of the above equation from $t=0$ to $t=T-1$, we have
\begin{align}
    &\sum_{t=0}^{T-1} \frac{\eta_{t}}{2}(1-\beta\frac{\eta_{t}}{2}) \Ebb[\norm{\down L(\thetaB^{t})}^{2}_{2}] \leq \Ebb[\beta(\thetaB^{0}-\thetaB^{T})] \nonumber \\& + \sum_{t=0}^{T-1}\frac{\eta_{t}}{2}(1+\frac{3\beta\eta_{t}}{2})L_{\thetaB\xB}^{2}\delta^{2} + \beta\eta_{t}^{2}\sigma^{2} \nonumber
\end{align}
We set $\eta_{t}=\min(1/\beta, \sqrt{\frac{\Delta}{T\beta\sigma^2}})$, we have
\begin{align}
    \frac{1}{T} \sum_{t=0}^{T-1} \Ebb[\norm{\down L(\thetaB^{t})}^{2}_{2}] \leq 4\sigma \sqrt{\frac{\beta\Delta}{T}} + 5L_{\thetaB\xB}^{2}\delta^2 \nonumber
\end{align}

Thus, we complete the proof.
\end{proof}

\section{Appendix:Calculating the Optimal Control for the Toy Example}

Here, we calculate the control solution of the toy problem in the main paper. The loss function is
\begin{align}
    \ell(\theta,x) = \frac{\theta ^2}{2}-\frac{(x-\theta )^2}{\theta ^2+1},
\end{align}
where $x$ plays the role of data and $\theta$ plays the role of
trainable parameters. We will assume that the data point $x=0$ so that the non-robust loss is
\begin{align}
    l(\theta, 0) = \frac{\theta ^2}{2}+\frac{1}{\theta ^2+1}-1,
\end{align}
which has two minima at $\theta = \pm \sqrt{\sqrt{2} - 1}$. However, the robust loss is
\begin{align}
    \tilde{\ell}(\theta) = \max_{x \in \Rcal} l(\theta, x)
    = \frac{1}{2} \theta^2,
\end{align}
which has a unique minimum at $\theta=0$.

\paragraph{Inner Loop Optimization}

We have $\nabla_x \ell = -\frac{2 (x-\theta )}{\theta ^2+1}$ and so when we use step size ${{\alpha_t}}$ and we are at current parameter value
$\theta_t$, we have the inner loop iterations

\begin{align}
    x^{k} = x^{k-1} + {{\alpha_t}}
    \left(
        -\frac{2 (x^{k-1}-\theta_t )}{\theta ^2_t+1},
    \right), \qquad k=0,\dots,{K_t}-1 \quad ({K_t} {{\alpha_t}} = \tau), \qquad x_0 = 0
\end{align}

Note that we have ignored the sign and clipping steps for simplicity.
This has the exact solution

\begin{align}
    x({K_t}, {{\alpha_t}}) = \theta_t - \theta_t
    \left(
    1-\frac{2 {{\alpha_t}}}{\theta ^2+1}
    \right)^{\tau /{{\alpha_t}}}
\end{align}

\paragraph{Outer Loop Optimization and Control Problem}

For the outer loop, we will take as in the main paper the continuum approximation and treat $t$ as a continuous variable. In this case, we have the gradient flow

\begin{align}
    \dot{\theta}_t = - \nabla_\theta \ell(x({K_t},{{\alpha_t}}),\theta_t)
\end{align}

The control parameters are $u_t = ({K_t}, {{\alpha_t}})$ satisfying
$\tau = {K_t} {{\alpha_t}}$. The running cost is the number of inner loop
steps, which is simply $R(u_t) = \gamma {K_t} = \gamma \tau / {{\alpha_t}}$.
In addition, we will assume that $\tau$ is large so that the inner
loop is required to be stable, which places the constraint
$1-\frac{2 {{\alpha_t}}}{\theta ^2+1} \leq 1$. This can be thought of as
having a running barrier loss which is $+\infty$ unless the previous
constraints are satisfied. Hence, we arrive at the optimal control
problem

\begin{align}
    \min_{\{ u_t \}}
    \Phi(\theta(T)) + \int_{0}^{T} R(u_t) dt
\end{align}

where $\Phi(\theta) = \theta^2/2$ and $R(u_t) = \gamma {K_t} = \gamma \tau / {{\alpha_t}}$ subject to

\begin{align}
    &\dot{\theta}_t =  - \nabla_\theta
    \ell(x({K_t},{{\alpha_t}}),\theta_t)
    =
    -\frac{2 \theta_t  \left(1-\frac{2 \alpha}{\theta ^2+1}\right)^{\tau / \alpha}}{\theta_t ^2+1}+\frac{2 \theta_t ^3 \left(1-\frac{2 \alpha}{\theta_t ^2+1}\right)^{\frac{2 \tau }{\alpha}}}{\left(\theta_t ^2+1\right)^2}+\theta_t, \qquad \theta_0 \in \Rcal. \\
    &{{\alpha_t}} \in [1, (1+\theta_t^2) / 2] \qquad {K_t} {{\alpha_t}} = \tau
\end{align}

\paragraph{PMP Solution and Optimal Schedule.}

The Hamiltonian is
\begin{align}
    H(\theta,p,u) =
    p \left(\frac{2 \theta  \left(1-\frac{2 \alpha }{\theta ^2+1}\right)^{\tau /\alpha }}{\theta ^2+1}-\frac{2 \theta ^3 \left(1-\frac{2 \alpha }{\theta ^2+1}\right)^{\frac{2 \tau }{\alpha }}}{\left(\theta ^2+1\right)^2}-\theta \right)-\frac{\gamma  \tau }{\alpha }
\end{align}

And so the PMP equations are
\begin{align}
    &\dot{\theta}_t^* = \nabla_p H(\theta_t^*, p_t^*, u_t^*), \qquad \theta^*_0 = \theta_0 \\
    &p_t^* = - \nabla_x  H(\theta_t^*, p_t^*, u_t^*), \qquad p^*_T = - \theta_T \\
    &u_t^* = \mathrm{arg\,max}_{u} H(\theta^*_t, p^*_t, u)
\end{align}
where the last maximization is taken with respect to the constraints $\alpha \in [0,(1+{\theta^{*}_t}^2)]$. By writing out the right hand sides we can observe that $p^*_t$ is always of the opposite sign as $\theta^*_t$ (which do not change sign as it involves in time), and for $\tau > 1+\theta_0^2$ (which we choose $\tau$ to satisfy) we observe that $H$ is monotone increasing in $\alpha$, and so the maximum is attained at
\begin{align}
    \alpha^*_t = \frac{1}{2}(1+{\theta_t^*}^2).
\end{align}

This allows us to solve the PMP equations and yields the optimal inner
loop learning rate schedule
\begin{align}
    {\alpha_t}^* = \frac{1}{2}
    \left(
        1+{\theta_0^2} e^{-2 t}
    \right)
\end{align}

Note that this is also exactly the solution for which the optimal
control criterion yields a value of 0. Alternatively, the optimal
schedule for the inner loop steps is
\begin{align}
    {K_t}^*= \tau/{\alpha_t}^* = \frac{2\tau}{\theta_0^2 e^{-2t}+1}
    = \frac{2 \tau }{\theta_0 ^2+1}
    +
    \left(
	    \frac{4 \theta_0 ^2 \tau }{\left(\theta_0 ^2+1\right)^2}
    \right)
    t + \mathcal{O}(t^2)
\end{align}
This is an increasing schedule, similar to what we adopt in Amata. Of
course, in general the optimal control solution for real problems may
not be of this form so this only serves as a motivation for the
heuristic developed.

\section{Appendix:Approximation of the optimal control criterion}
\label{sec:app_criterion}

In this section we show how to derive the approximation~\eqref{eq:simplified_criterion} from~\eqref{eq:criterion}. The primary assumption we make is that $T_2 - T_1 = \eta \ll 1$, which allows one to use a local approximation for various functions to derive a simple-to-compute criterion. For this reason, the derivation here will be largely heuristic.

First, we assume that we consider a constant control $\uB_s \equiv \uB$ on the interval $s \in [T_1,T_2] \equiv [t, t+\eta]$.
Next, recall that the adversarial robustness which serves as our terminal loss function for the control problem is $\Phi(\thetaB) = \max_{\zB} \ell ( h_{\thetaB} [\Acal_{\thetaB,\zB }(x)], y)$. In practice, any control that sufficiently conducts the adversarial perturbation serves as a good approximation. This is because the DNNs are very prone to adversarial attacks before the convergence of adversarial training and the real-time adversarial loss is close to the worst-case adversarial loss. Thus, we can approximate $\Phi(\thetaB)$ by $\ell ( h_{\thetaB} [\Acal_{\thetaB,\zB }(x)], y)$ by some chosen $\zB$ that makes practical computations easy.

Now, assuming regularity we can expand the state equation~\eqref{eq:pmp_state} to get to leading order as $\eta\rightarrow 0$
\begin{align}\label{eq:state_exp}
    \thetaB^{\uB}_{s} = \thetaB_{t} + o(1),
    \qquad
    s \in [t,t+\eta].
\end{align}
On the other hand, the co-state equation~\eqref{eq:pmp_costate} gives
\begin{align}\label{eq:costate_exp}
    \pB^{\uB}_{s}
    = -\nabla_{\thetaB}
    \Phi(\thetaB_{T_2}) + o(1)
    =
    - \nabla_{\thetaB} \ell ( h_{\thetaB_{t}} [\Acal_{\thetaB_{t},\zB}(x)], y) + o(1),
    \qquad
    s \in [t,t+\eta].
\end{align}
But, by definition $F(\thetaB,\uB) = -\nabla_{\thetaB} \ell ( h_{\thetaB} [\Acal_{\thetaB,\uB}(x)], y)$.
Then, for any $s\in[t,t+\eta]$, we have from~\eqref{eq:state_exp} and~\eqref{eq:costate_exp} that
\begin{align}
    \begin{split}
        H(\thetaB^{\uB}_s, \pB^{\uB}_s, \vB)
        &=
        \langle
            \pB^{\uB}_s,
            F(\thetaB^{\uB}_s, \vB)
        \rangle
        - R(\vB) \\
        &=
        \langle
            \nabla_{\thetaB} \ell ( h_{\thetaB_{t}}
            [\Acal_{\thetaB_t,\zB}(x)], y),
            \nabla_{\thetaB} \ell ( h_{\thetaB_{t}}
            [\Acal_{\thetaB_t,\vB}(x)], y)
        \rangle
        - R(\vB) + o(1)
    \end{split}
\end{align}
Note that the above expressions seem to be arbitrarily chosen from a strict mathematical sense, but these choices enable fast practical computation. Any of such expansions, assuming enough regularity, are equivalent in the limit $\eta\rightarrow 0$.

Plugging the above into the expression for $C$ (Eq.~\ref{eq:criterion}), we have
\begin{align}
    \begin{split}
        C(\uB, t)
        \approx&
        \max_{\vB}
        \left\{
            \langle
                \nabla_{\thetaB} \ell ( h_{\thetaB_{t}}
                [\Acal_{\thetaB_t,\vB}(x)], y),
                \nabla_{\thetaB} \ell ( h_{\thetaB_{t}}
                [\Acal_{\thetaB_t,\zB}(x)], y)
            \rangle
            - R(\vB)
        \right\}
        \\
        &-
        \left(
            \langle
                \nabla_{\thetaB} \ell ( h_{\thetaB_{t}}
                [\Acal_{\thetaB_t,\uB}(x)], y),
                \nabla_{\thetaB} \ell ( h_{\thetaB_{t}}
                [\Acal_{\thetaB_t,\zB}(x)], y)
            \rangle
            - R(\uB)
        \right)
    \end{split}
\end{align}
As discussed in the beginning of this section, to ease computation, we may replace $\zB$ by $\vB$ in the first term and $\uB$ in the second term, assuming that they give sufficient adversarial perturbations to get
\begin{align}
    \begin{split}
        C(\uB, t)
        \approx &
        \max_{\vB}
        \left\{
            \|
                \nabla_{\thetaB} \ell ( h_{\thetaB_{t}}
                [\Acal_{\thetaB_t,\vB}(x)], y)
            \|^2
            - R(\vB)
        \right\}
        \\
        &-
        \left(
            \|
                \nabla_{\thetaB} \ell ( h_{\thetaB_{t}}
                [\Acal_{\thetaB_t,\uB}(x)], y)
            \|^2
            - R(\uB)
        \right).
    \end{split}
\end{align}
Upon substituting $\vB=(K,\alpha)$ and $\uB=(K_t,\alpha_t)$ gives the desired result.

\end{document}